\titlespacing*{\section}{0pt}{7pt}{4pt}
\titlespacing*{\subsection}{0pt}{3pt}{1pt}
\renewcommand{\maketitle}{%
	\vbox{%
		\hsize\textwidth
		\linewidth\hsize
		\centering
		{\LARGE\bf \@title\par}
		
		\def\And{%
		\end{tabular}\hfil\linebreak[0]\hfil%
		\begin{tabular}[t]{c}\bf\rule{\z@}{24\p@}\ignorespaces%
		}
		\begin{tabular}[t]{c}\bf\rule{\z@}{24\p@}\@author\end{tabular}%
		\@thanks
	}
}
\newcites{appendix}{Additional References}%
\titlespacing*{\section}{14pt}{7pt}{4pt}
\titlespacing*{\subsection}{6pt}{3pt}{1pt}
\newcommand{\bxi}{\boldsymbol{\xi}}
\providecommand{\lin}[1]{\ensuremath{\left\langle #1 \right\rangle}}
  \providecommand{\R}{\mathbb{R}} 
  \providecommand{\E}{{\mathbb E}}
  \providecommand{\prob}[1]{{\rm Pr}\left[#1\right] }
  \providecommand{\dd}{\mathbf{d}}
  \renewcommand{\gg}{\mathbf{g}}
  \providecommand{\xx}{\mathbf{x}}
  \providecommand{\yy}{\mathbf{y}}
  \providecommand{\cD}{\mathcal{D}}
  \providecommand{\cL}{\mathcal{L}}
  \providecommand{\cR}{\mathcal{R}}
\providecommand{\mycomment}[2]{\todo[caption={}]{\textbf{#1: }#2}}%
\providecommand{\inlinecomment}[3]{%
{\color{#1}#2: #3}}%
\newcommand\commenter[2]%
\newcommand\csname i#1\endcsname[1]{\inlinecomment{#2}{#1}{##1}}
\newcommand\csname #1\endcsname[1]{\mycomment{#1}{##1}}
\newtheorem{lemma}{Lemma}
\newtheorem*{lemma*}{Lemma}
\newtheorem{corollary}{Corollary}
\newtheorem*{corollary*}{Corollary}
\newtheorem{definition}{Definition}
\newtheorem{theorem}{Theorem}
\newtheorem*{theorem*}{Theorem}
\newtheorem{proposition}[theorem]{Proposition}
\DeclarePairedDelimiterX{\inp}[2]{\langle}{\rangle}{#1, #2}
\DeclarePairedDelimiterX{\abs}[1]{\lvert}{\rvert}{#1}
\DeclarePairedDelimiterX{\norm}[1]{\lVert}{\rVert}{#1}
\DeclarePairedDelimiterX{\cbr}[1]{\{}{\}}{#1} 
\DeclarePairedDelimiterX{\rbr}[1]{(}{)}{#1} 
\DeclarePairedDelimiterX{\sbr}[1]{[}{]}{#1} 
\definecolor{mydarkblue}{rgb}{0,0.08,0.45}
\renewcommand{\cite}{\citep}
\author{%
	Amirkeivan Mohtashami \\ EPFL 
	\And
	Martin Jaggi \\ EPFL
	\And
	Sebastian U. Stich\\
	CISPA\footnotemark
}
\title{Special Properties of Gradient Descent with Large Learning Rates}
\begin{document}
\maketitle\footnotetext{CISPA Helmholtz Center for Information Security}
\begin{abstract}
	When training neural networks, it has been widely observed that a large step size is essential in stochastic gradient descent (SGD) for obtaining superior models. However, the effect of large step sizes on the success of SGD is not well understood theoretically. %
	Several previous works have attributed this success to the stochastic noise present in SGD.  However, we show through a novel set of experiments that the stochastic noise is not sufficient to explain good non-convex training, and that instead the effect of a large learning rate itself is essential for obtaining best performance.
	We demonstrate the same effects also in the noise-less case, i.e.\ for full-batch GD. We formally prove that GD with large step size---on certain non-convex function classes---follows a different trajectory than GD with a small step size, 
	which can lead to convergence to a global minimum instead of a local one. Our settings provide a framework for future analysis which allows comparing algorithms based on behaviors that can not be observed in the traditional settings.
\end{abstract}

\section{Introduction} 

While using variants of gradient descent (GD), namely stochastic gradient descent (SGD), has become standard for optimizing neural networks,
the reason behind their success and the effect of various hyperparameters is not yet fully understood. One example is the practical observation that using a large learning rate in the initial phase of training is necessary for obtaining well performing models  \citep{li_towards_2020}. Though this behavior has been widely observed in practice, it is not fully captured by existing theoretical frameworks.  

Recent investigations of SGD's success \cite{kleinberg_alternative_2018,pesme_implicit_2021} have focused on understanding the implicit bias induced by the stochasticity. Note that the effective variance of the trajectory due to the stochasticity of the gradient is moderated by the learning rate (see Appendix~\ref{app:learning_rate_trajectory} for more intuition). Therefore, using a larger learning rate amplifies the stochasticity and the implicit bias induced by it which can provide a possible explanation for the need for larger learning rates. We show that this explanation is incomplete by demonstrating cases where using stochasticity  with arbitrary magnitude but with a small learning rate, can not guarantee convergence to global minimum whereas using a large learning rate can. Furthermore, we provide a practical method to increase stochasticity without changing the learning rate when training neural networks and observe that increased stochasticity can not replace the effects of large learning rates.  Therefore, it is important to study how a larger learning rate affects the trajectory beyond increasing the stochasticity.

To that end, in this work we show that randomly initialized full-batch gradient descent with a high learning rate provably escapes local minima and converges to the global minimum over of a class of non-convex functions. In contrast, when using a small learning rate, GD over these functions can converge to a local minimum instead. Such difference is not observable under traditional assumptions such as smoothness. Hence, our settings also provide a framework to compare optimization methods more closely, for example in their ability to escape local minima. %

We further show the positive effect of using a high learning rate to increase the chance of completely avoiding undesirable regions of the landscape such as a local minimum. Note that this behavior does not happen when using the continuous version of GD, i.e.\ gradient flow which corresponds to using infinitesimal step sizes. The difference remains even after adding the implicit regularization term identified in \cite{smith_origin_2021} in order to bring trajectories of gradient flow and gradient descent closer. %

Finally, to show the relevance of our theoretical results in practice, we demonstrate evidence of an escape from local minimum (not to be confused with escaping from saddle points) when applying GD with a high learning rate on a commonly used neural network architecture. Our observations signify the importance of considering the effects of high learning rates for understanding the success of GD. %

Overall, our contributions can be summarized as follows:

\begin{itemize}[leftmargin=12pt,nosep]
	\item Demonstrating the exclusive effects of large learning rates even in the stochastic setting both in theory and in practice, showing that they can not be reproduced by increasing stochasticity and establishing the importance of analyzing them.
	\item Capturing the distinct trajectories of large learning rate GD and small learning rate GD in theory on a class of functions, demonstrating the empowering effect of large learning rate to escape from local minima and providing a framework for future analysis.
	\item Providing experimental evidence showing that %
	gradient descent escapes from local minima in neural network training when using a large learning rate, establishing the relevance of our theoretical results in practice.
\end{itemize}

\section{Related Work}
\label{sec:related_works}

Extensive literature exists on studying the effect of stochastic noise on the convergence of GD. Several works have focused on the smoothing effect of injected noise \cite{kleinberg_alternative_2018,wang_eliminating_2021,orvieto_anticorrelated_2022,chaudhari_entropy-sgd_2017}. In \cite{vardhan_tackling_2022} it has been shown that by perturbing the parameters at every step (called perturbed GD) it is possible to converge to the minimum of a function $f$ while receiving gradients of $f + g$, assuming certain bounds on $g$.  Other works use different models for the stochastic noise in SGD and use it to obtain convergence bounds or to show SGD prefers certain type (usually flat) of minima \cite{xie_diffusion_2021,wu_how_2018}. In order to better understand the effect of various hyperparameters on convergence, \citet{jastrzebski_three_2018,jastrzebski_relation_2019} show the learning rate (and its ratio to batch size) plays an important role in determining the minima found by SGD. In \cite{pesme_implicit_2021} it was shown that SGD has an implicit bias in comparison with gradient flow and its magnitude depends on the learning rate. While this shows one benefit of using large learning rates, in this work, we provide evidence that the effect of learning rate on optimization goes beyond controlling the amount of induced stochastic noise. \looseness=-1

Another line of research has been investigating the ability of SGD to avoid saddle points. \citet{lee_gradient_2016} show that gradient descent will not converge to saddle points with high probability. Other works have also investigated the time it takes SGD to escape from a saddle point. \cite{fang_sharp_2019,daneshmand_escaping_2018,du_gradient_2017}. These results are tangential to ours since we are interested in escaping from local minima not saddle points.

Prior work also experimentally establish existence of different phases during training of a neural network. \citet{cohen_gradient_2021} show that initially Hessian eigenvalues tend to grow until reaching the convergence threshold for the used learning rate, a state they call "Edge of Stability". This growth is also reported in \cite{lewkowycz_large_2020} for the maximum eigenvalue of the Neural Tangent Kernel \cite{jacot_neural_2020} where it has also been observed that this value decreases later in training, leading to convergence. Recent works have also investigated GD's behavior at the edge of stability for some settings \cite{arora_understanding_2022} obtaining insights such as its effect on balancing norms of the layers of a two layer ReLU network \cite{chen_gradient_2022}.  In our results, GD is above the conventional stability threshold while it is escaping from a local minimum but returns to stability once the escape is finished. 

In  \cite{elkabetz_continuous_2021} it is conjectured that gradient descent and gradient flow have close trajectories for neural networks. However, the aforementioned observations suggest that gradient descent with a large learning rate visits a different set of points in the landscape than GD with a small learning rate. Therefore, this conjecture might not hold for general networks. The difference in trajectory is also supported by the practical observation that a large learning rate leads to a better model \cite{li_towards_2020}. 

To bridge this gap and by comparing gradient flow and gradient descent trajectories, \citet{barrett_implicit_2021} identify an implicit regularization term on gradient norm induced by using discrete steps. Still, this term is not enough to remove a local minimum from the landscape. Other implicit regularization terms specific to various problems have also been proposed in the literature \cite{ma_implicit_2020,razin_implicit_2020,wang_large_2022}. In this paper, we provide experimental evidence and showcase the benefits of using large step sizes that are unlikely to be representable through a regularization term, suggesting that considering discrete steps might be necessary to understand the success of GD. 

The type of obstacles encountered during optimization of a neural network is a long-standing question in the literature. \citet{lee_gradient_2016} show that gradient descent with random initialization almost surely avoids saddle points. However it is still unclear whether local minima are encountered during training. In \cite{goodfellow_qualitatively_2015} it was observed that the loss decreases monotonically over the line between the initialization and the final convergence points. However, it was later shown that this observation does not hold when using larger learning rates \cite{lucas_analyzing_2021}. \citet{swirszcz_local_2017} also show that it is possible to create datasets which lead to a landscape containing local minima. Furthermore, better visualization of the landscape shows non-convexities can be observed on some loss functions \cite{li_visualizing_2018}. For the concrete case of two layer ReLU networks, \citet{safran_spurious_2018} show gradient descent converges to local minima quite often without the help of over-parameterization. Also, it was shown that in the over-parameterized setting, the network is not locally convex around any differentiable global minimum and one-point strong convexity only holds in most but not all directions \cite{safran_effects_2021}. These observations show the importance of understanding the mechanisms of escaping local minima. We also use these observations to make assumptions that are practically justifiable. \looseness=-1

There also exists a body of work on which properties of a minimum leads to better generalization \cite{keskar_large-batch_2017,dinh_sharp_2017,dziugaite_computing_2017,tsuzuku_normalized_2019}. In this work, our goal is to show the ability of gradient descent to avoid certain minima when using a high learning rate. However, the argument about whether these minima offer better or worse generalization is outside the scope of this work.

\section{Main Results}

\paragraph{Theoretical Proof of Escaping From Local Minima with a Large Learning Rate}
The need for a large learning rate in practice is commonly explained based on the intuition of escaping certain local minima\footnote{We would like to note that throughout the paper, we sometimes misuse the terms ``global'' and ``local'' minimum to refer to desirable and undesirable minima respectively. For example when discussing generalization, a desirable minimum might not have the lowest objective value but enjoy properties such as flatness.}	. However, a theoretical setting where GD escapes from a local minimum and converges to a global minimum is lacking. Such settings are necessary both for understanding success of GD and for analyzing the effectiveness of other optimizers. In this work, we introduce a class of functions where such behavior can be observed from GD. This is stated in Theorem~\ref{thm:class_minima_change} which we describe here informally and leave the formal version to Section~\ref{sec:theory-escape}.

\begin{theorem}[Informal]
	\label{thm:class_minima_change}
	There exists a class of functions ${C}_l$ such that for any $f \in C_l$:
	\begin{enumerate}
		\item $f$ has at least two minima $\xx^\dagger$ and $\xx_\star$.
		\item With a large learning rate, GD with random initialization converges to  $\xx_\star$  almost surely but using a small learning rate there is a strictly positive probability of converging to $\xx^\dagger$.
	\end{enumerate}
\end{theorem}

Prior results in non-convex settings only hold for fixed learning rates below a common threshold, e.g. $\frac{2}{L}$ for $L$-smooth functions \cite{bottou_optimization_2018,vaswani2019fast}. In contrast,  Theorem~\ref{thm:class_minima_change} shows that GD can converge with learning rates above this threshold as well. While this comes at the cost of putting additional constraints on the landscape, note that this still relaxes the conditions of convergence at least for functions in $C_l$ and is an extension over prior results. 

More importantly, Theorem~\ref{thm:class_minima_change} provides a setting where it is possible to distinguish different algorithms based on behaviors that were not observable in the traditional framework, e.g.\ under $L$-smoothness of the whole landscape. In particular, in our settings the convergence point will be to a different minimum for the larger learning rate. Furthermore, unlike the traditional settings, it is possible for GD trajectory to go through phases where the gradient norm increases temporarily which has also been observed in practice \cite{cohen_gradient_2021}. In Section~\ref{sec:exp-gd-escape} we show that when running GD on a neural network, a similar phenomenon can be observed and show that during this phase GD escapes a minimum and converges to another one. While we do not claim that neural networks are in the class of functions we introduce,  GD over neural networks shares more similar behaviors with $C_l$ than with the traditional settings. Hence, it can be expected that analyzing an optimization algorithm over $C_l$ would allow better understanding of its behavior over neural network.

\paragraph{Theoretical Analysis of Avoiding Local Minima}

As an alternative to escaping from minima, we note that due to discrete steps in GD, it may not visit any point in an arbitrary but small part of the landscape $X$, such as a local minimum. However, note that there may still exist a set of starting points for which GD iterates reach a point in $X$.   Therefore, assuming the starting point is chosen randomly, not visiting any point in $X$ is a probabilistic event. In this work, we provide a lower bound for the probability of this event in Theorem~\ref{thm:lowerbound_avoid_event} which we state here informally and postpone the formal statement to Section~\ref{sec:theory-avoid}. 

\begin{theorem}[Informal]
	\label{thm:lowerbound_avoid_event}
	For any arbitrary part (subset) of the landscape $X$ sufficiently far from the global minimum, let $E_X$ be the probabilistic event that GD, when initialized randomly from a large enough set, will not iterate over any point in $X$. Then under certain assumptions on the landscape, $\prob{E_X}$ can be lower bounded where the bound depends monotonically increasing on the learning rate and inversely on the size of $X$ (as measured by Lebesgue measure). In particular, if $X$ is finite, this probability is 1.
\end{theorem}

The dependence of the lower bound on the learning rate is intuitive as a larger learning rate allows larger steps and makes it less probable (but not impossible) to visit a small part of the landscape as illustrated in Figure~\ref{fig:toy-flat}. 
 Theorem~\ref{thm:lowerbound_avoid_event} facilitates applying prior results when assumptions are violated on a part of landscape. In these cases, so long as the area in violation of the assumptions is small, the probability of failure can be small enough to be considered negligible in practice. As such this result can be useful to prove convergence with high probability using conditions that hold mostly but not completely everywhere on the landscape, such as one-point strong convexity on one hidden layer ReLU neural networks \cite{safran_effects_2021}.
 
Theorem~\ref{thm:lowerbound_avoid_event} also highlights an important difference between continuous and discrete optimization. In particular, the skipping behavior can be essential to success of the optimization but does not occur in the continuous regime. Note that as the region can be arbitrarily complex, this problem can not be ratified by adding regularization terms such as those identified in \cite{smith_origin_2021}.  This suggests other methods are needed to bridge the gap between gradient flow and GD. wWe demonstrate an example where both behaviors of escaping and avoiding local minima are required to converge to the desired minimum  in Appendix~\ref{app:toy-example}. 

\paragraph{Demonstrating Effects of Large Learning Rate in Neural Networks}

Traditional analysis of GD's convergence depends on an upper bound on the learning rate which ensures GD gets closer to the minimum at every step. In certain settings, it can be also shown that GD with a learning rate above this upper bound gets further from the minimum at every step. As such, there is a threshold which separates convergence and divergence of GD. However, this threshold depends on the local curvature and can be different at each point. Therefore, the learning rate can violate the upper bound for some points and satisfy it for others which leads to GD going through different phases of divergence and convergence.  Indeed, this alternating phases is how GD converges in the function class introduced in Theorem~\ref{thm:class_minima_change}. 

While escaping local minima is an intuitive explanation, it is not evident that the alternation between diverging and converging phases also happen in neural networks. In particular, it seems in practice these alternations happen too quick so that the loss usually always has a decreasing trend. This makes it hard to verify the relevance of escaping behavior for neural network landscape. We do so in Section~\ref{sec:exp-gd-escape} by deliberately finding a point close to a minimum that GD would converge to with a small learning rate. In contrast, when applying GD with a large learning rate from this point, we can clearly observe an escape both in the trajectory and in the value of the loss.

\paragraph{Importance of Large Learning Rate Despite the Effects of Stochastic Noise} Prior observations in practice that demonstrate the importance of using a large learning rates \cite{li_towards_2020} are made when applying SGD not full-batch GD. While we establish the importance of escaping from local minima in neural networks landscape in Section~\ref{sec:exp-gd-escape}, this behavior can also be a result of stochastic noise since increasing the learning rate magnifies the effect of stochastic noise  (see Appendix~\ref{app:learning_rate_trajectory} for further intuition). As such, one possible explanation for the need of large learning rate is the magnified stochastic noise facilitating escaping from local minima. This explanation makes it questionable whether it is necessary to understand direct effects of learning rate on the trajectory or is it enough to only consider the stochastic noise.

\begin{figure*}[t]
	\centering
	\begin{subfigure}[t]{0.3\textwidth}
		\centering
		\includegraphics[width=\textwidth]{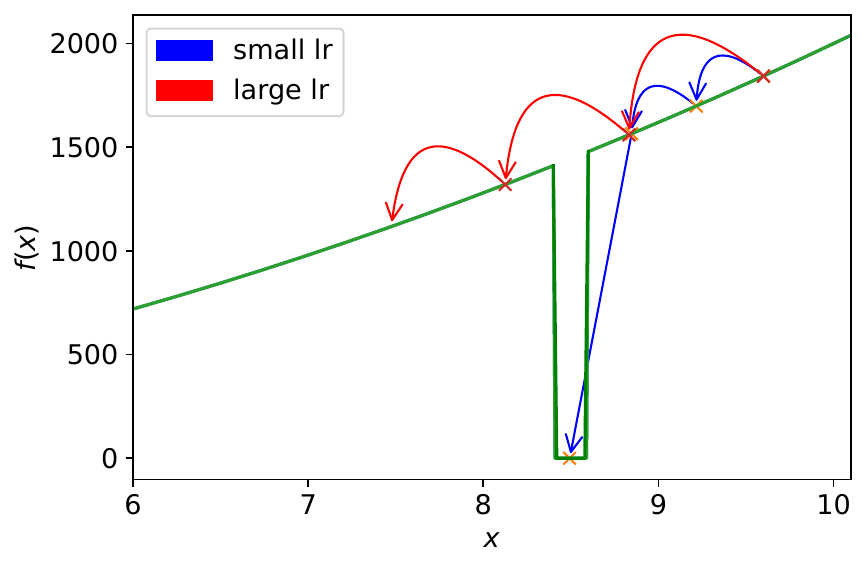}
		\caption{GD escapes with large LR.}
	\end{subfigure}\hfill
	\begin{subfigure}[t]{0.3\textwidth}
		\includegraphics[width=\textwidth]{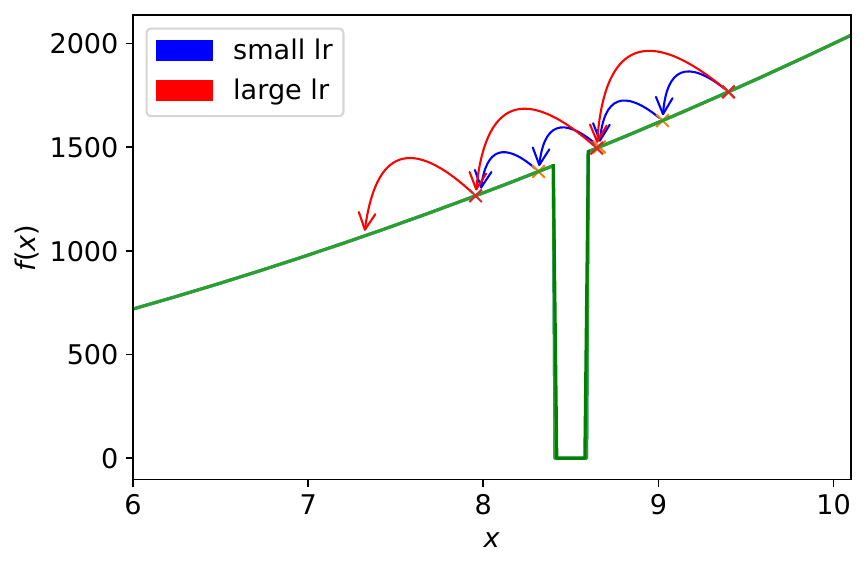}
		\caption{GD escapes with both LRs.}
	\end{subfigure}\hfill
	\begin{subfigure}[t]{0.3\textwidth}
		\centering
		\includegraphics[width=\textwidth]{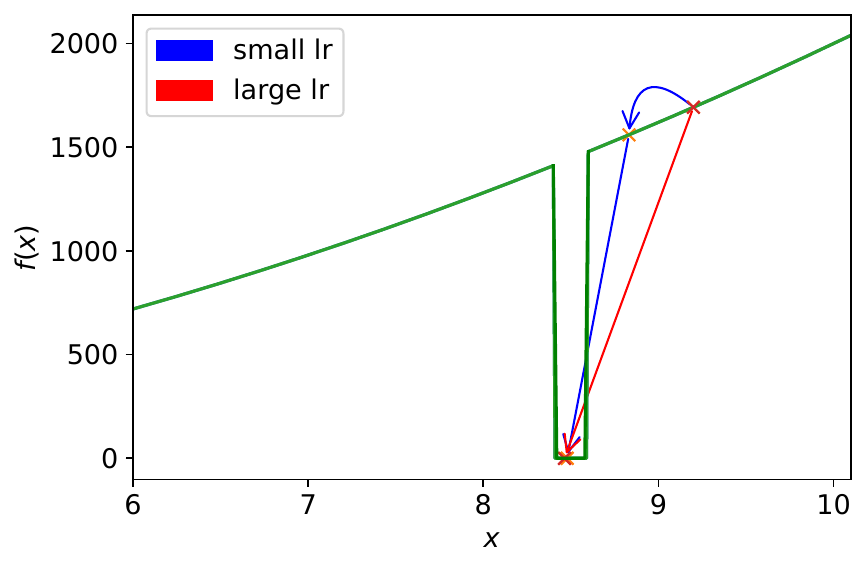}
		\caption{GD does not escape.}
	\end{subfigure}
	
	\caption{Success of GD to avoid a region based on the magnitude of learning rate when initialized from different points. While various cases are possible, it is more likely to avoid the minimum with a higher learning rate.\looseness=-1}
	\label{fig:toy-flat}
\end{figure*} 

In this work, we show that the effects of using a large learning rate goes beyond magnifying stochastic noise. To that end, we first provide an example in Section~\ref{sec:theory-noise} where  escaping from a local minimum and converging to the global minimum can only be achieved with a large learning rate even in presence of stochastic noise. Furthermore, we demonstrate this result in practice in Section~\ref{sec:exp-repeats} by decoupling the effect of stochastic noise on the trajectory and the magnitude of the learning rate when training neural networks.  Our results show that the effects of large learning rates remain crucial for converging to the correct minimum even in presence of (magnified) stochastic noise. \looseness=-1

\section{Theoretical Analysis} 
\label{sec:theory}

\newcommand{\Lg}{L_{\text{global}}}

We now state our results more formally. For our theoretical analysis, we focus on optimizing the minimization problem
\[
f_\star := \min_{\xx \in \R^d} f(\xx) 
\]
using (full-batch) gradient descent with random initialization. For completeness, we provide a pseudo code in the Appendix~\ref{app:gd-pseudo-code}, Algorithm~\ref{alg:gd-pseudo-code}. 

As is done widely in the literature, we assume smoothness (as defined in Definition~\ref{def:lsmooth}) over regions of the landscape to ensure the gradient does not change too sharply.

\begin{definition}[$L$-smoothness]
	\label{def:lsmooth}
	A function $f \colon \R^d \to \R$ is $L$-smooth if  it is differentiable and there exists a constant $L > 0$  such that:\vspace{-1mm}
	\begin{align}
		\label{eq:lsmooth}
		\norm{\nabla f(\xx)- \nabla f(\yy)} &\leq L \norm{\xx - \yy}\,, \qquad \forall \xx,\yy \in \R^{d} \,.
	\end{align}
\end{definition}

Similarly, we need to ensure sharpness of certain regions, in particular around a local minima, to obtain our results. Therefore, to ensure a lower bound for sharpness in our analysis, we use one-point strong convexity assumption on these regions as defined in the following definition which also commonly appears in the literature:

\begin{definition}[$\mu$-one-point-strongly-convex (OPSC) with respect to $\xx_\star$ over $M$]
	\label{def:ospc}
	A function $f \colon \R^d \to \R$ is one-point strongly convex with respect to $\xx_\star$ if it is differentiable and there exists a constant $\mu > 0$  such that:\vspace{-1mm} 
	\begin{align}
		\label{eq:ospc}
		\lin{\nabla f(\xx), \xx - \xx_\star} &\geq \mu \norm{\xx - \xx_\star}^2\,, \qquad \forall \xx \in M \,.
	\end{align}
\end{definition}

Assuming OPSC property is common in the literature. When this assumption is applied over the whole landscape, it has been shown to guarantee convergence to $\xx_\star$ \cite{lee_gradient_2016, kleinberg_alternative_2018,safran_effects_2021}.  However, in this work we  make this assumption only over regions around a local minima. Furthermore, we use this assumption to ensure sharpness which we show can result in escaping from the regions where this assumption holds rather than converging to them. Note that recent works have verified both theoretically and empirically that landscapes of neural networks satisfy this property to some extent \cite{kleinberg_alternative_2018,safran_effects_2021}. For example, \citet{safran_effects_2021} show that the condition is satisfied with high probability over the trajectory of perturbed gradient descent on over-parameterized two-layer ReLU networks when initialized in a neighborhood of a global minimum.  We also note that there exists other variants of this definition such as quasi-strong convexity \cite{Necoara2019:linear} or $(1, \mu)$-(strong) quasar convexity \cite{hinder_near-optimal_2022}, which are similar but slightly stronger. %

\subsection{Escaping From Local Minima with a Large Learning Rate}
\label{sec:theory-escape}

We first state a lemma which is the key to proving Theorem~\ref{thm:class_minima_change}. This lemma defines a set of criteria for the region $M$ around a minimum $\xx^\dagger$ as well as the region around $M$, called $P(M)$, that ensures GD escapes from $M$ moving towards a different minimum $\xx_\star$. To build further intuition, Figure~\ref{fig:toy-sharp} provides an example of how GD with large learning rate may escape a sharp minimum. We state an overview of the lemma here and provide its formal version and the complete proof in Appendix~\ref{app:escape}.

\begin{figure*}[t]
	\centering
	\begin{minipage}{0.30\textwidth}
		\centering
	\includegraphics[width=\textwidth]{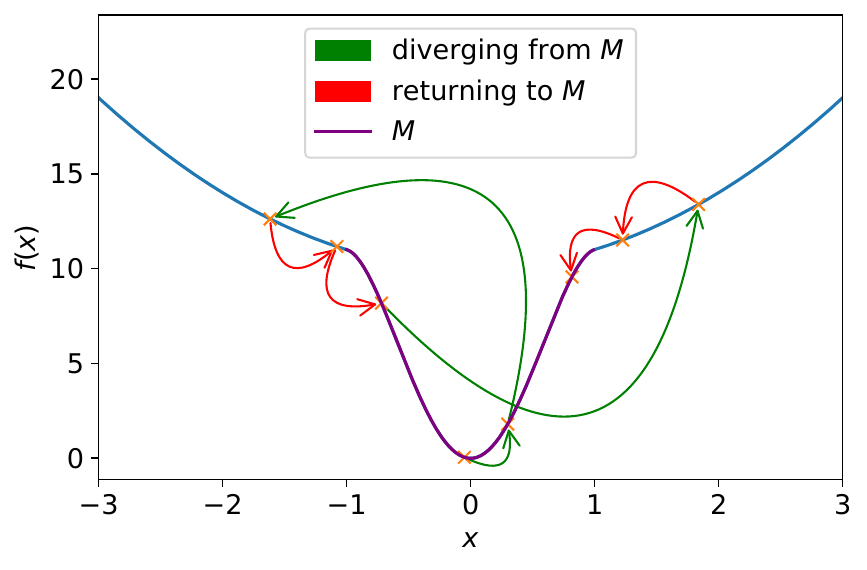}
	\caption{A case where GD keeps returning to a sharp minimum showing that a lower bound on the distance to the global minimum might be necessary to show it can be avoided.}
	\label{fig:bad-function-without-distance} 
\end{minipage}\hfill
\begin{minipage}{0.64\textwidth}
	\centering
	\begin{subfigure}[t]{0.48\textwidth}
		\centering
		\includegraphics[width=\textwidth]{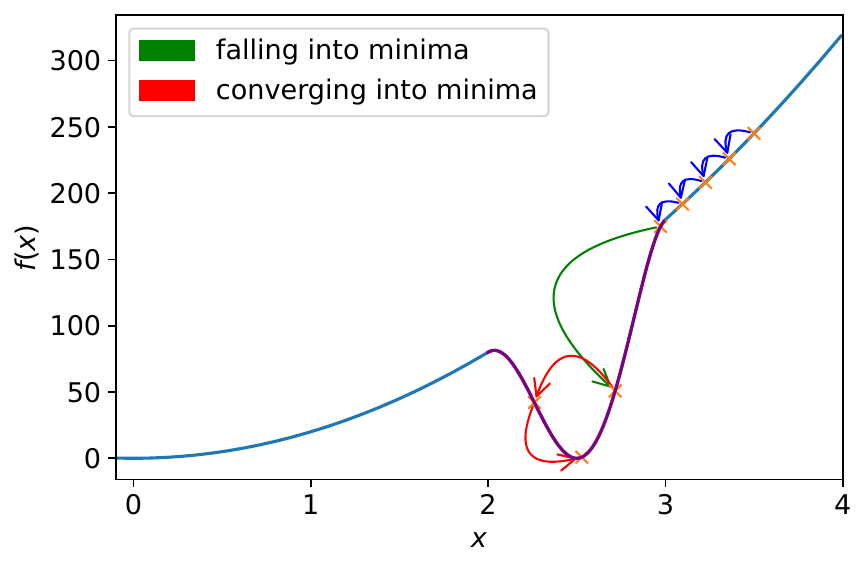}
		\caption{GD with small LR converges.}
	\end{subfigure}\hfill
	\begin{subfigure}[t]{0.48\textwidth}
		\centering
		\includegraphics[width=\textwidth]{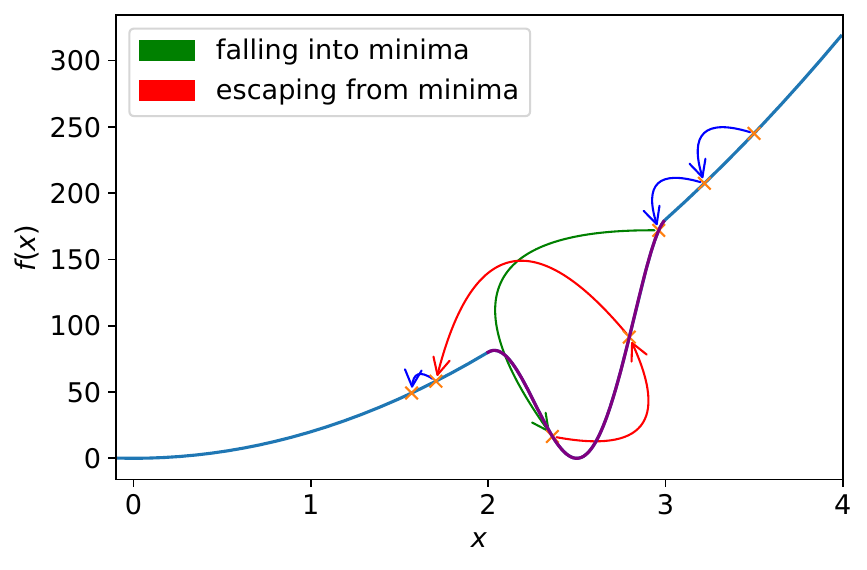}
		\caption{GD with large LR escapes.}
	\end{subfigure}
	\caption{Different behaviors of GD based on the magnitude of learning rate in escaping or converging a sharp minima. GD with a high enough learning rate always escapes the minimum.}
	\label{fig:toy-sharp}
\end{minipage}\hfill\vspace{-10pt}
\end{figure*}

\begin{lemma}
	\label{lemma:escape}	
	Let $f$ be a function that is $\Lg$-smooth with a global minimum  $\xx_\star$. Assume there exists a local minimum $\xx^\dagger$ around which the following holds:
	
	\begin{itemize}
		
		\item  $f$ is $\mu^\dagger$-OPSC with respect to $\xx^\dagger$ over a set $M$ containing $\xx^\dagger$ with diameter $r$.
		\item Let $P(M)$ denote the ball around $\xx^\dagger$  with radius $r_P$  excluding points $M$. Over $P(M)$, $f$ is  $L < \Lg$-smooth  and $\mu_\star$-OPSC with respect to the global minimum $\xx_\star$ such that $\mu^\dagger > \frac{2L^2}{\mu_\star}$. The radius $r_P$ depends on $r$, $\gamma$ and $\Lg$.
		\item Assume $\xx^\dagger$ is sufficiently far from $\xx_\star$, i.e. $\norm{\xx^\dagger - \xx_\star}_2 \geq \tau$ where $\tau$ depends on $\mu_\star$, $r$ and $\gamma$.
	\end{itemize}
	
	Then, using a suitable learning rate $\frac{2}{\mu^\dagger} < \gamma \leq \frac{\mu_\star}{L^2}$, if GD reaches a point in $M$, it will escape $M$ and reach a point with distance to $x_\star$ of less than $\norm{\xx^\dagger - \xx_\star} - r$ almost surely.
\end{lemma}

Figure~\ref{fig:thm-escape-illustration} provides an illustration of different regions defined in the theorem's statement.   Lemma~\ref{lemma:escape} and Theorem~\ref{thm:class_minima_change} provide an improved theoretical setting for future analysis. For example, a second order optimizer may behave differently in this setting and converge to $\xx^\dagger$ instead. We point out that Lemma~\ref{lemma:escape} only holds if the learning rate is large enough and GD with small learning rates, i.e. $\gamma \leq \frac{2}{\Lg}$, would instead converge to $\xx^\dagger$. Therefore, this difference is not observable in the settings of previous works. As we experimentally verify in Sections~\ref{sec:exp-repeats} and~\ref{sec:exp-gd-escape} , GD's behavior in the large learning rate settings changes the convergence point in neural networks as well. As such, analyzing new optimizers in this settings can be useful to ensure they work more closely to GD for more complex scenarios such as neural networks. 

We emphasize that this lemma allows for multiple local minima to exist on the landscape and only applies constraints around each local minimum. Furthermore, we point out that the lemma only ensures that GD will exit the local minima after some steps. In order to obtain convergence guarantees to the global minimum, it is necessary to assume a convergence property on the rest of the landscape as well. Indeed, this is how we build the class of functions to prove Theorem~\ref{thm:class_minima_change}.  We provide a more thorough discussion about our assumptions in Appendix~\ref{app:escape_lemma_assumptions}. %

	\begin{figure}[t]
			\centering
			\includegraphics[width=.6\linewidth]{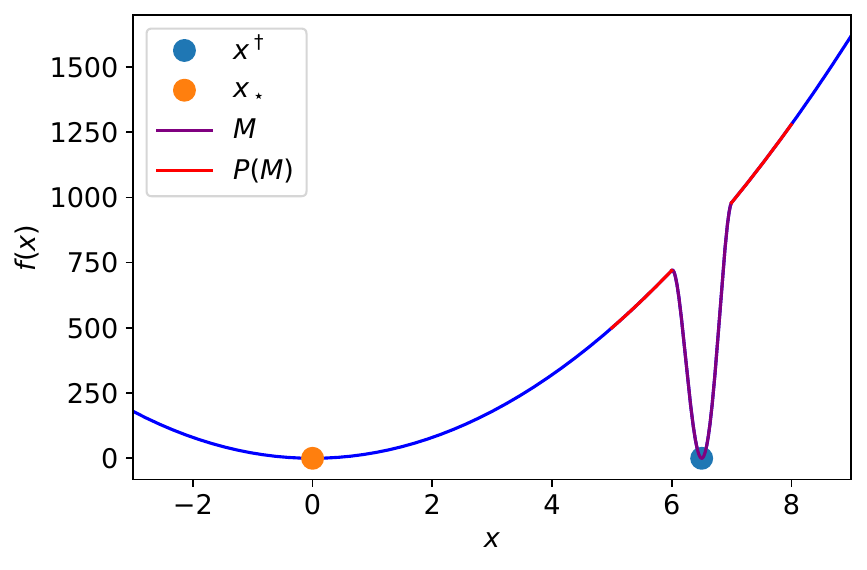}
			\caption{Illustration of different regions defined in Lemma~\ref{lemma:escape}.}
			\label{fig:thm-escape-illustration}
	\end{figure}

Given Lemma~\ref{lemma:escape}, it can be seen that if it is possible to ensure the iterations of GD get closer to the global minimum on the rest of the landscape, it is possible to ignore existence of the region $M$. This is because either the iterations would never cross $M$ or if they do, they will eventually reach a point closer to the global minimum according to Lemma~\ref{lemma:escape}. We build the class of functions for Theorem~\ref{thm:class_minima_change} based on this observation. We now state the formal statement of Theorem~\ref{thm:class_minima_change} and leave the proof to Appendix~\ref{app:thm-class-minima-change}.

\newcounter{theorembackup}
\setcounter{theorembackup}{\value{theorem}}
\setcounterref{theorem}{thm:class_minima_change}
\addtocounter{theorem}{-1}
\begin{theorem}[Formal]
	Let the set $C_l$ be the set of  all functions such as $f$ that is $L$-smooth and $\mu_\star$-OPSC with respect to the global minimum $\xx_\star$ in its landscape except on a region $M$ containing a local minimum $\xx^\dagger$ satisfying the conditions in Lemma~\ref{lemma:escape}. GD initialized randomly inside $M$ converges to $\xx^\dagger$ with a small learning rate $\gamma < \frac{\mu^\dagger}{\Lg^2}$. In contrast, GD initialized randomly over any arbitrary set $W$ with positive Lebesgue measure $\cL(W) > 0$ will instead converge to $\xx_\star$ with a large learning rate $\frac{2}{\mu^\dagger} < \gamma \leq \frac{\mu_\star}{L^2}$ almost surely. 
\end{theorem}
\setcounter{theorem}{\value{theorembackup}}

We emphasize that Thoerem~\ref{thm:class_minima_change} extends prior convergence results by proving convergence for learning rate above the traditional threshold $\frac{2}{\Lg}$. The extension comes at the cost of additional constraints over the landscape some of which might be unavoidable as we discuss in Appendix~\ref{app:escape_lemma_assumptions}.\looseness=-1

\begin{figure}[t]
	\centering

		\centering
		\includegraphics[width=.48\linewidth]{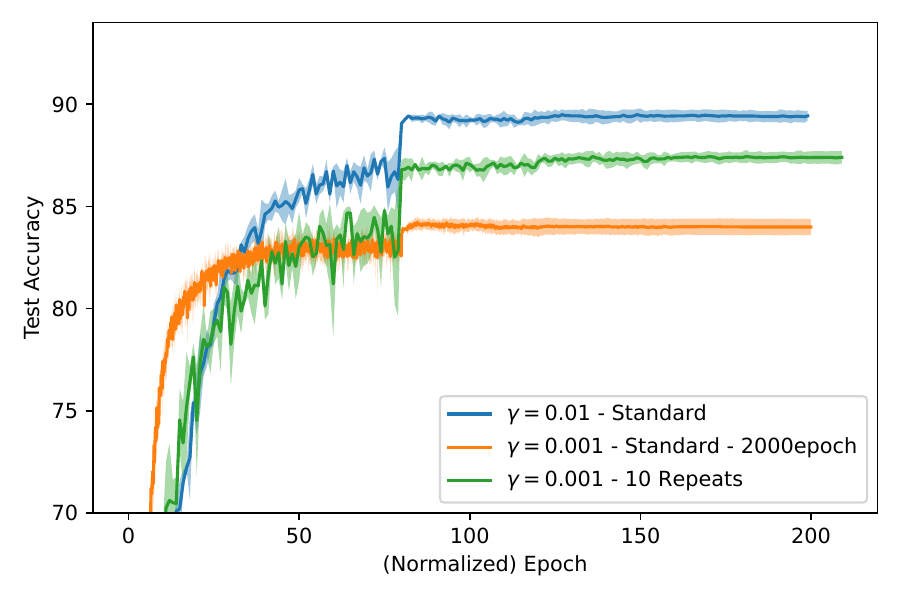}\hfill
		\includegraphics[width=.48\linewidth]{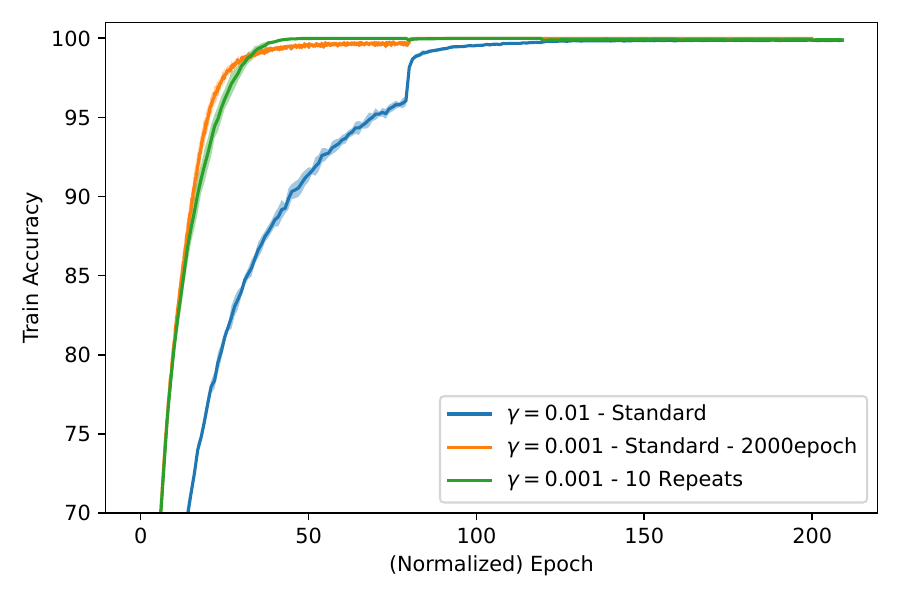}

		\caption{Comparsion between performance of SGD with different learning rates. 
			The gap in performance between large and small learning rates, even after repeatedly using the same batch to maintain the effect of stochastic noise, suggests that learning rate has an effect on trajectory beyond controlling stochastic noise. Repeating batches is turned off at epoch 200 and 10 additional epochs are performed (green). For the experiment with 2000 epochs (orange), the plot is normalized to 200 epochs.}
		\label{fig:sgd-vs-repeats}%
		\vspace{-15pt}
\end{figure}

\subsection{Avoiding Local Minima}
\label{sec:theory-avoid}

The following key lemma is used to prove Theorem~\ref{thm:lowerbound_avoid_event}.
\begin{lemma}
	\label{lemma:avoid}
	Assume gradient descent is initialized randomly on the set $W$ and is run with learning rate $\gamma   \leq  \frac{1}{2L}$. Let $X \in \cR^d$ be an arbitrary set of points in the landscape. Assume $f$ is $L$-smooth over $\cR^d \setminus X$. Let $\cL(S)$ denote the Lebesgue measure of any set $S$.  The probability of encountering any points of $X$ in the first $T$ steps of gradient descent, i.e. $\xx_i \in X $ for some $1 \leq i \leq T$ is at most $2^{(T + 1)\cdot d} \cdot  \frac{\cL(X)}{\cL(W)}$.\looseness=-1
\end{lemma}

We provide the complete proof in Appendix~\ref{app:thm-avoid-proof}. The dependence of the bound on $T$ seems inevitable in general since the optimization might force the iterations toward certain regions, such as the region around the minimum. Similarly, the exponential dependence  on $d$ seems unavoidable in the general case. For example, if the function is $f(x := (x_1, \ldots, x_d)) := \frac{L}{2}\sum_{i=1}^d x_i^2$ with the set X being the region around the minimum, GD converges exponentially towards X in each direction. Therefore the Lebesgue measure set of the points that converge to X within T steps increases exponentially with d (since each direction contributes with an exponential factor). 

Lemma~\ref{lemma:avoid} does not need any assumptions about the region $X$ . However, the dependency on $d$ and $T$ can be alleviated by making further assumptions.
 Indeed, we use one such assumption to ensure the iterations of GD move away from the undesired region $X$ and obtain Theorem~\ref{thm:lowerbound_avoid_event} which we state formally here and leave its proof to Appendix~\ref{app:avoid-ospc}.

\setcounter{theorembackup}{\value{theorem}}
\setcounterref{theorem}{thm:lowerbound_avoid_event}
\addtocounter{theorem}{-1}
\begin{theorem}[Formal]
	\label{cor:avoid-ospc}
	Let $X$ be an arbitrary set of points. Assume $f$ is $L$-smooth and $\mu_\star$-OPSC with respect to a minima $x_\star \notin X$ over $\R^d  \setminus X$.  Define $c_X := \inf\{\norm{\xx - \xx_\star} \mid \xx \in X\} $ and $r_W :=  \sup\{\norm{\xx - \xx_\star} \mid \xx \in W \}$. The probability of not encountering any points of $X$ during running gradient descent with $\gamma \leq \frac{\mu_\star}{L^2}$ is at least $1 - 2^d \cdot \frac{r_W}{c_X}^{-\frac{d}{\log_2(1 - \gamma \mu_\star)}} \cdot \frac{ \cL(X)}{\cL(W)}$ when $c_X \leq r_W$ and is $1$ otherwise.\looseness=-1
\end{theorem}
\setcounter{theorem}{\value{theorembackup}}

	\begin{figure}[t]
		\centering
		\includegraphics[width=.6\linewidth]{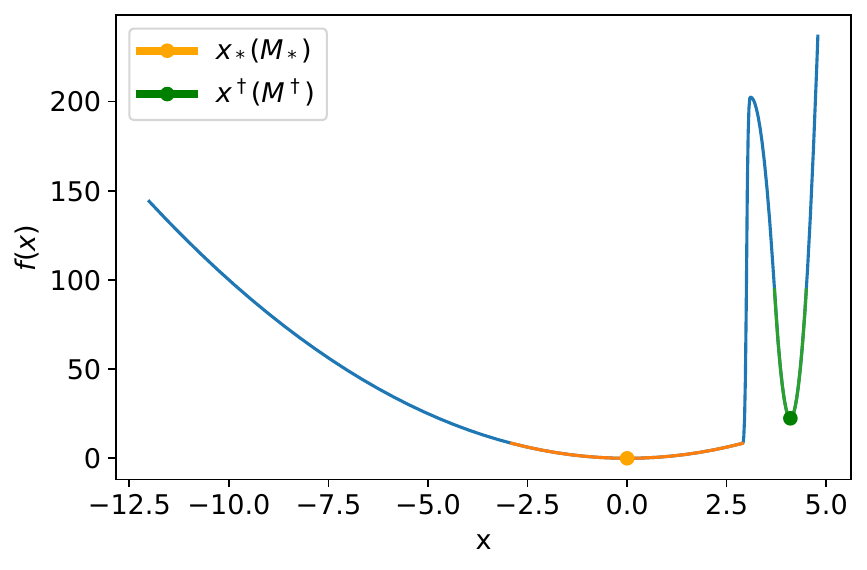}
		\caption{An example where convergence to global minimum can only be achieved by using large learning rates.}
		\label{fig:stochastic-example}\vspace{-10pt}
\end{figure}

\subsection{Importance of Large Learning Rate Despite the Effects of Stochastic Noise}
\label{sec:theory-noise}
We now consider the case of SGD where the stochastic noise is applied as an additive term to the gradient.  The update step of SGD in this case would be:
\begin{equation}
	\label{eq:sgd}
	\xx_{t+1} := \xx_t - \gamma (\nabla f(\xx_t) + \bxi_t) \, .
	\end{equation}
For example, when the global objective $ f(\xx)$ is a finite sum of $n$ different objectives, e.g. one for each data point, we will have $\bxi_t := \nabla f_{r_t}(\xx) - \nabla f(\xx)$ where $r_t$ is the index of data point used in $t$-th step. %

In this section we consider the case where the noise $\bxi_t$ is drawn from a uniform distribution ${\rm Uniform}(-\sigma, \sigma)$ and assess the convergence point of GD on an example function plotted in Figure~\ref{fig:stochastic-example}. This function contains a local minima $x^\dagger$
and a global minimum $x_\star$. Optimally, we would like to ensure convergence to the global minimum regardless of the initialization point. The following proposition shows that this happens only when using a large learning rate and is not possible when using a small learning rate regardless of the magnitude of the noise. We provide a formal description of this proposition and its proof in Appendix~\ref{app:stochastic-minima-theory}.

\begin{proposition}
	\label{prop:stochastic-effect}
	Consider running SGD on the function plotted in Figure~\ref{fig:stochastic-example}. If the learning rate is sufficiently small, starting close to $x^\dagger$, the iterates will never converge to the optimum $x_\star$ nor to a small region around it regardless of the magnitude of the noise.%
	 On the other hand, by using a large learning rate, given that the stochastic noise satisfies certain bounds, GD will succeed to converge to the optimum $x_\star$ given any starting point.
\end{proposition}

\section{Experiments}

We now provide practical evidence to show the effects of high learning rate also apply and are essential in optimization of neural networks. In our experiments we train a ResNet-18 \cite{he2016deep} without batch normalization on CIFAR10 \cite{krizhevsky2009learning} dataset. %

\begin{figure}[t]
	\centering
	\begin{subfigure}[t]{0.47\linewidth}
		\centering
		\includegraphics[width=\textwidth]{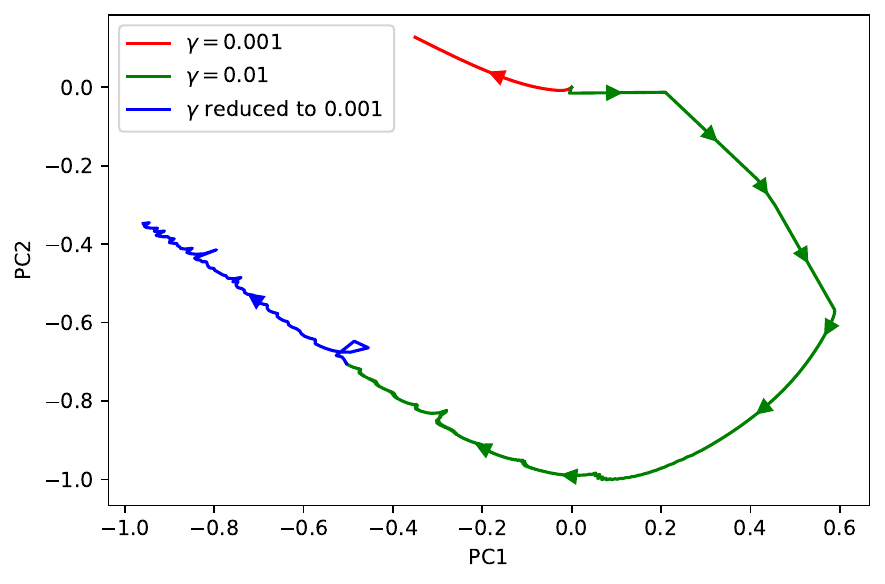}
		\caption{Trajectory of (full-batch) GD with small and large learning rate. }
		\label{fig:gd-trajectory}
	\end{subfigure}\hfill
	\begin{subfigure}[t]{0.47\linewidth}
		\centering
		\includegraphics[width=\textwidth]{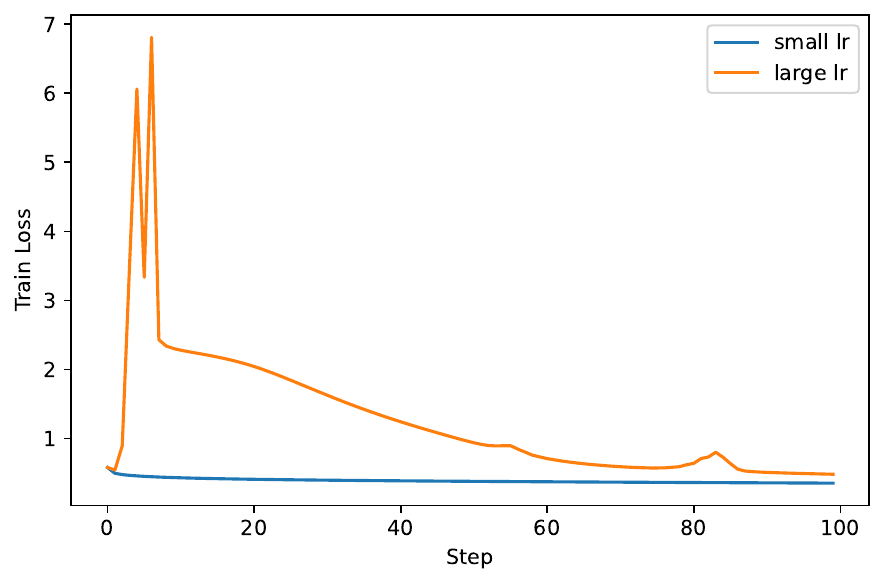}
		\caption{The value of train loss when using different values of learning rate. }
		\label{fig:gd-escape-loss}
	\end{subfigure}
	\caption{Behavior of GD for learning rates $0.001$ (small) and $0.01$ (large). The initialization is obtained by warm-starting the network using SGD with a small learning rate $0.001$. Using a large learning rate changes the trajectory sharply and even if the learning rate is reduced again after several steps (blue line) we move toward a different direction in the landscape. This is accompanied by a sharp increase of loss at the beginning that can be attributed to GD escaping from a local sharp region in the landscape.}
	\label{fig:gd-warmstart}\vspace{-10pt}
\end{figure}

\subsection{Disentangling Effects of Stochastic Noise and Learning Rate}
\label{sec:exp-repeats}

As can be seen from (\ref{eq:sgd}), reducing the learning rate would also reduce the variance of the effective stochastic noise $\gamma \bxi_t$. This entanglement makes it hard to assess the effects of stochastic noise and large learning rate separately. We design the following method to maintain the level of noise when reducing the learning rate.

\paragraph{SGD with Repeats} In order to simulate the same magnitude of noise while still using a smaller learning rate, every time a mini-batch is drawn, we use it for $k$ steps 
 before drawing another mini-batch.  Note that when $k = 1$, we recover standard SGD. Re-using the same batch $k$ times, allows the bias of the mini-batch to be amplified $k$ times, so when reducing learning rate by $\frac{1}{k}$ the overall magnitude remains unchanged. This is explained in more detail in Appendix~\ref{app:learning_rate_trajectory}.

We compare standard SGD with learning rate $0.01$, standard SGD with learning rate $0.001$, and SGD with $k = 10$ and learning rate $0.001$. We 
apply $0.0005$ weight decay, $0.9$ momentum, and 
decay the learning rate at epochs 80, 120, and 160 by $0.1$. Results without momentum are reported in Appendix~\ref{app:sgd-without-momentum}. When training with standard SGD and learning rate $0.001$ we train the model for 10 times more epochs (2000 epochs) in order to obtain a fair comparison and rescale its plot to 200 epochs. 
In this case, learning rate decay happens at epochs 800, 1200, and 1600.
Note that when running SGD with $k = 10$ repeats, we perform 10 steps using each batch while going through the whole dataset at each epoch. Therefore, the total number of steps in SGD with $k=10$ is the same as standard SGD with 2000 epochs.
Furthermore, when we have $k > 1$ we train the model for 10 more epochs at the end 
and use each batch only once (as in standard SGD) during the additional epochs. We perform these additional steps since training for several steps on one batch at the end of training might lead to overfitting on that batch which is not desirable for test performance. In Appendix~\ref{app:compare-different-switchs} we also experiment with turning off repeats earlier in the training and observe no significant improvement.  Finally, we ensure that the experiment with $k = 10$ uses the same initialization point and the same ordering of batches used for training with learning rate $0.01$. 

The results (averaged over 3 runs) are plotted in Figure~\ref{fig:sgd-vs-repeats}. The first clear observation is that SGD with learning rate $0.01$ leads to a much better model than SGD with learning rate $0.001$. More importantly, while amplifying the noise through repeats helps lower the gap, it still has a performance below training with the large learning rate. 

Explaining the positive effect of using SGD over GD on convergence has been the focus of several prior work. For example, \citet{kleinberg_alternative_2018} argue that applying SGD allows optimization to be done over a smoothed version of the function which empirically satisfies good convergence properties, particularly, one-point strong convexity toward a minimum. We argue that our observation provides a more complete overview and suggests that even after applying stochastic noise (which for example can lead to a smoothing of the function), there might be certain regions of the landscape that can only be avoided using a high learning rate. As we described above, one can consider the effect of stochastic noise to be the improvement observed when using repeats with a small learning rate in comparison with training in a standard way which still does not close the gap with training using a high learning rate. Therefore, the effects of using a high learning rate, such as those described in Section~\ref{sec:theory}, are still important in determining the optimization trajectory even in stochastic setting.

\subsection{Comparing Trajectories of Large and Small Learning Rates}
\label{sec:exp-gd-escape}

In Section~\ref{sec:theory}, we proved some of the effects of using large step sizes in avoiding or escaping certain minima in the landscape. We now demonstrate that these effects can be observed in real world applications such as training neural networks. To be able to observe the effect of large learning rate more clearly, we first warm-start the optimization by running SGD with a small learning rate $0.001$ with $k=10$ repeats (as described in Section~\ref{sec:exp-repeats}) for 20 epochs to obtain parameters $\xx_{\text{warm}}$.
We do this to get near a minimum that would be found when using the small learning rate. Then, we start full-batch GD  from $\xx_{\text{warm}}$ with two different learning rate $0.001$ (small) and $0.01$ (large). We do not apply momentum when performing full-batch GD but apply 0.0005 weight decay. However we did not observe any visible difference in the results without weight decay. Similar to \cite{li_visualizing_2018}, we obtain the first two principal components of the vectors $\xx_1 - \xx_0, \xx_2 - \xx_0, \ldots, \xx_t - \xx_0$ and plot the trajectory along these two directions in Figure~\ref{fig:gd-warmstart}. We can clearly observe that GD with a large learning rate changes path and moves toward a different place in the landscape. GD continues on the different path even when the learning rate is reduced back to $0.001$ after 400 steps. Furthermore, looking at the loss values, we can observe a peak at the beginning of training that closely resembles what we expect to observe when GD is escaping from a local sharp region. This clearly shows that these behaviors of GD are not merely theoretical and are also relevant in real world applications.\looseness=-1 

Note that while we do not observe similar high spikes in the loss in the next steps, we conjecture that this behavior of escaping sharp regions is constantly happening throughout training. This is also confirmed by observations in \cite{cohen_gradient_2021} which show sharpness increases throughout training until reaching the threshold $\frac{2}{\mu}$ where $\mu$ is the learning rate. GD will then oscillate between areas sharper and smoother than this threshold. %
As a result of constantly avoiding sharp regions, symptoms of an escape such as a spike in loss is not observed. While we observe smaller increases in the loss, these can be due to oscillations also observed in \cite{cohen_gradient_2021} along the highest eigenvectors which are not the same as escaping. Results in the same work show that in these cases the parameters do not move in these directions and only oscillate around the same center. Developing better visualization techniques or identifying other effects of using a high learning rate on GD's trajectory can help explain this behavior further and both of these directions are ground for future work. \looseness=-1

\section{Future Work}
\label{sec:future-work}
Developing better methods for visualization of the landscape and trajectory to obtain further insight on how GD avoids locally sharp regions is grounds for future work. Furthermore, various extensions on our theoretical results are also possible, such as showing other effects of using a large learning rate on trajectory that facilitate escaping from local minima. Finally, obtaining similar results with a relaxed set of assumptions would also be an interesting direction of research.\looseness=-1

\section{Conclusion} 
We argue that for understanding real world applications such as neural networks training it is crucial to analyze GD in the large learning rate regime and that the behavior stemming from using a large learning rate is irreplaceable by other mechanisms such as stochasticity. In this work, we provide ample evidence to support our argument as well as providing a settings where the special behaviors of this regime can be observed unlike the traditional settings.

 In particular, to strengthen prior practical observations on the importance of large learning rate, we design a method to amplify stochastic noise without increasing the learning rate, disentangling the effects of stochastic noise and high learning rates. We observe that while a higher stochastic noise leads to a better model, it is not enough to close the gap with the model obtained using a high learning rate. Therefore, we argue that the effect of learning rate goes beyond controlling the impact of stochastic noise even in SGD. In contrast, recent works on analyzing success of SGD focus on continuous settings \cite{xie_diffusion_2021} and only take step size into account when modeling the noise \cite{pesme_implicit_2021}. We further demonstrate escaping from sharp regions in training of neural networks that only happens with large learning rates.

More importantly, we introduce a setting which is closer to practice by also imitating the effects of large learning rates widely observed in practice. Such behaviors are not observable under previous assumptions such as smoothness. Therefore, future optimization algorithms can also be evaluated in settings similar to ours ensuring they also benefit from similar escaping mechanisms which seem crucial in practice. We hope that our results will encourage future work on large step size regime.

\clearpage
{
\small
\bibliographystyle{plainnat}
\bibliography{references}
}
\clearpage

\appendix
\onecolumn

\section{Gradient Descent with Random Initialization}
\label{app:gd-pseudo-code}
\begin{algorithm}[H]
	\caption{Gradient Descent with Random Initialization}
	\label{alg:gd-pseudo-code}
	\begin{algorithmic}[1]
		\State Pick $\xx_0$ randomly from the set $W$.
		\For{$t = 1 \ldots T$} %
		\State $\xx_{t} \gets \xx_{t - 1} - \gamma \nabla f (\xx_{t -1}) $
		\EndFor
	\end{algorithmic}
\end{algorithm}

\section{Proof of Lemma~\ref{lemma:avoid}}
	\label{app:thm-avoid-proof}
	\begin{lemma*}
		Assume gradient descent is initialized randomly on the set $W$ and is run with learning rate $\gamma \leq \frac{1}{2L}$. Let $X \in \cR^d$ be the set of points that should not be encountered by GD and assume $f$ is $L$-smooth over $\cR^d \setminus X$. Let $\cL(S)$ denote the Lebesgue measure of any set $S$.  The probability of encountering any points of $X$ in the first $T$ steps of gradient descent, i.e. $\xx_i \in X $ for some $1 \leq i \leq T$ is at most $2^{(T + 1)\cdot d} \cdot  \frac{\cL(X)}{\cL(W)}$.
	\end{lemma*}
	\begin{proof}
		Define $g(\xx) := \xx - \gamma \nabla f(\xx)$. 	When $\gamma < \frac{1}{L}$, since $f$ is $L$-smooth over $\cD_ g :=  \cR^d \setminus X$, results of  \citet{lee_gradient_2016} show $g(\xx)$ is a diffeomorphism over $\cD_g$.	As a result, the function $g^T$ obtained by applying $g$ for $T$ times is also a diffeomorphism over the set
		\[
		\cD_{g^T} := \cR^d \setminus (X \cup g^{-1}(X) \cup \ldots \cup (g^{(T - 1)})^{-1}(X)) \, .
		\]
		 According to the change of variable formula for Lebesgue measure (for example, see \citeappendix[Eq. (3.7.2)]{bogachev_measure_2006}), for any measurable set $Y \subset  \cD_{g^T}$
		\[
		\cL(g^T(Y)) =  \int_Y|\det \nabla g^T(\yy)|\dd\yy  \, .
		\]
		 Since $\gamma \leq \frac{1}{2L}$,  we have for any $\yy \in \cD_g$,
		\[
		|\det \nabla g(\yy)| = |\det (I - \gamma \nabla^2 f(\yy))| \geq 2^{-d} \, .
		\]
		The last equality holds because smoothness ensures all eigenvalues of $\nabla^2 f(\xx)$ are at most $L$. So for any eigenvalue $\lambda_i$, $1-\gamma \lambda_i \geq \frac{1}{2}$.  Using this result, we also can obtain $|\det \nabla g^T(\yy)| \geq 2^{-Td}$ for any $\yy \in \cD_{g^T}$. Thus, we have 
		\[
		\cL(g^T(Y)) \geq 2^{-Td} \cL(Y) \, ,
		\]
		which means,
		\[\cL((g^T)^{-1}(X)  \cap \cD_{g^T}) \leq 2^{Td}\cL(X) \, . \]
		Note that while the above argument works for $T \geq 1$, the former inequality also trivially holds for $T = 0$.
		Hence 
		\begin{align*}
			\cL(\cup_{t=0}^T ((g^t)^{-1}(X) \cap W)) &\leq  \cL(\cup_{t=0}^T (g^t)^{-1}(X)) \\
			&=  \cL(\cup_{t=0}^T ((g^t)^{-1}(X) \cap \cD_{g^t})) \\
			&\leq \sum_{t=0}^T  \cL((g^t)^{-1}(X) \cap \cD_{g^t}) \\
			&\leq \sum_{t=0}^T  2^{td} \cL(X) \\
			&\leq \cL(X)(\sum_{t=0}^T  2^t)^d  \\
			& \leq 2^{(T+1)d}\cL(X) \, ,
			\end{align*}
		where in the last inequality we used $2^0 + 2^1 + \ldots 2^T < 2^{T + 1}$.  The theorem follows directly from this result.
	\end{proof}

The following corollary directly follows from Lemma~\ref{lemma:avoid}. We use this corollary in proving Lemma~\ref{lemma:escape} to avoid cases where we directly land on a minimum with $\nabla f(\xx) = 0$. 

\begin{corollary}
	\label{cor:avoid-finite}
	Let $f$ be $L$ smooth. If $X$ is a set with $\cL(X) = 0$, for example when it is a finite set of points, the probability of encountering  $X$ throughout training with gradient descent using $\gamma \leq \frac{1}{2L}$ and random initialization over a set $W$ with $\cL(W) > 0$ is 0. %
\end{corollary}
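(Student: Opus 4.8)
The plan is to study the one-step gradient-descent map $\Phi(\xx) := \xx - \gamma \nabla f(\xx)$ and control how Lebesgue measure is distorted when we pull $X$ back through iterates of $\Phi$. Since the iterates are deterministic given the initialization, the event ``$\xx_i \in X$ for some $1 \le i \le T$'' is precisely $\{\xx_0 \in \bigcup_{i=1}^{T}\Phi^{-i}(X)\}$, and since $\xx_0$ is drawn uniformly from $W$ its probability equals $\cL\big(W \cap \bigcup_{i=1}^T \Phi^{-i}(X)\big)/\cL(W)$. So it suffices to bound $\cL(\Phi^{-i}(X))$ by a geometric sequence in $i$ and apply a union bound.

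The key estimate is that $L$-smoothness of $f$ on $\R^d \setminus X$ together with $\gamma \le \frac{1}{2L}$ makes $\Phi$ \emph{expanding} away from $X$: for $\xx, \yy \in \R^d \setminus X$, writing $\Phi(\xx) - \Phi(\yy) = (\xx-\yy) - \gamma\big(\nabla f(\xx) - \nabla f(\yy)\big)$ and combining the reverse triangle inequality with \eqref{eq:lsmooth} gives $\norm{\Phi(\xx)-\Phi(\yy)} \ge (1-\gamma L)\norm{\xx-\yy} \ge \tfrac12 \norm{\xx-\yy}$. Hence $\Phi$ is injective on $\R^d \setminus X$ and its inverse there is $2$-Lipschitz, so by the standard fact that a $c$-Lipschitz map sends a measurable set $A$ to a set of measure at most $c^d \cL(A)$, we obtain $\cL\big(\Phi^{-1}(S)\setminus X\big) \le 2^{d}\,\cL(S)$ for every measurable $S$.

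To propagate this per-step bound I would use a first-hitting-time decomposition chosen so that the expansion estimate applies at \emph{every} step. Split the bad event according to whether $\xx_0 \in X$: the part with $\xx_0 \in X$ has measure at most $\cL(W\cap X) \le \cL(X)$. For trajectories with $\xx_0 \notin X$, set $A_i := \{\xx_0 : \xx_0,\dots,\xx_{i-1}\notin X,\ \xx_i \in X\}$ for $1\le i\le T$ and $A_0 := X$; then the bad event restricted to $\{\xx_0\notin X\}$ is the disjoint union $\bigcup_{i=1}^{T} A_i$. One checks directly from the definitions that $\Phi(A_i) \subseteq A_{i-1}$ and $A_i \subseteq \R^d \setminus X$, hence $A_i \subseteq \Phi^{-1}(A_{i-1})\setminus X$ and therefore $\cL(A_i) \le 2^{d}\cL(A_{i-1})$; induction gives $\cL(A_i) \le 2^{id}\cL(X)$. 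Summing the geometric series, $\cL\big(W\cap\{\text{bad}\}\big) \le \cL(X) + \sum_{i=1}^{T} 2^{id}\cL(X) \le \big(1 + 2^{(T+1)d} - 2^{d}\big)\cL(X) \le 2^{(T+1)d}\cL(X)$, using $\sum_{i=1}^T 2^{id} \le 2^{(T+1)d} - 2^{d}$ and $1 - 2^d \le 0$ (both valid since $d\ge1$); dividing by $\cL(W)$ finishes the proof.

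The main obstacle — and the reason for the hitting-time bookkeeping rather than a direct bound on $\Phi^{-i}(X)$ — is that $f$ is only assumed smooth off $X$, so the expansion inequality, and with it the clean $2^d$-per-step measure bound, genuinely fails for steps that begin or land in $X$; a cruder accounting that adds a fresh copy of $\cL(X)$ at each pullback costs an extra $+T\,\cL(X)$ and is too weak for large $T$ and small $d$. Restricting to trajectories whose first $i$ iterates all stay outside $X$ is exactly what turns every step into an off-$X$ step. A secondary, purely technical issue is measurability: $\nabla f$ exists everywhere so $\Phi$ is Lebesgue measurable, and $\Phi$ is continuous and injective on $\R^d\setminus X$, so the images appearing above are measurable (Lusin--Souslin); these points can be dispatched with standard facts without affecting the argument.
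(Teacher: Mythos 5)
Your argument is correct, but it does substantially more than the paper does at this point: the paper's entire proof of Corollary~\ref{cor:avoid-finite} is the single sentence ``directly follows from Theorem~\ref{thm:avoid}'' (plug in $\cL(X)=0$, get probability $0$ for every $T$, and take the countable union over $T$). What you have written is, in effect, a fresh proof of the Theorem~\ref{thm:avoid} bound itself, followed by the (implicit) specialization to $\cL(X)=0$. That re-derivation is genuinely different from the paper's proof of the theorem. The paper invokes the diffeomorphism result of Lee et al., applies the change-of-variables formula, and lower-bounds the Jacobian via $|\det(I-\gamma\nabla^2 f)|\ge 2^{-d}$, i.e.\ it works with the second derivative and decomposes the bad set as $\bigcup_t (g^t)^{-1}(X)\cap \cD_{g^t}$. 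You instead establish the $(1-\gamma L)$-expansion of $\Phi$ directly from the reverse triangle inequality and the Lipschitz gradient, conclude the inverse of $\Phi$ off $X$ is $2$-Lipschitz, and then use the elementary fact that a $c$-Lipschitz map inflates Lebesgue measure by at most $c^d$, combined with a first-hitting-time decomposition into the disjoint sets $A_i$. Two concrete advantages of your route: it never touches $\nabla^2 f$ (the paper's determinant bound tacitly assumes $f$ is twice differentiable, which $L$-smoothness as defined in Definition~\ref{def:lsmooth} does not guarantee), and the hitting-time partition makes the ``every pullback step stays off $X$'' bookkeeping explicit rather than leaving it implicit in the domain $\cD_{g^T}$. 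The only thing missing is the last, trivial sentence: since the bound is linear in $\cL(X)$, setting $\cL(X)=0$ gives probability zero for each $T$, and countable subadditivity over $T$ gives probability zero over the entire trajectory.
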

	
\section{Proof of Theorem~\ref{thm:lowerbound_avoid_event}}
\label{app:avoid-ospc}
\begin{theorem*}
	Let $f$ be $L$-smooth and $\mu_\star$-OPSC with respect to a minima $x_\star$ over $\R^d \setminus X$.  Define $c_X := \inf\{\norm{\xx - \xx_\star} \mid \xx \in X\} $ and $r_W :=  \sup\{\norm{\xx - \xx_\star} \mid \xx \in W \}$. The probability of encountering any points of $X$ during running gradient descent with $\gamma \leq \frac{\mu_\star}{L^2}$ is upper bounded by  $2^d \cdot \frac{r_W}{c_X}^{-\frac{d}{\log_2(1 - \gamma \mu_\star)}} \cdot \frac{ \cL(X)}{\cL(W)}$ when $c_X \leq r_W$ and  is zero otherwise.
\end{theorem*}
\begin{proof}
Due to $\mu_\star$-OPSC property of the landscape over $\R^d \setminus X$, as long as $\xx_t \notin X$, we have
\begin{align*}
\norm{\xx_{t+1} - \xx_\star}_2^2 &= \norm{\xx_t - \gamma\nabla f(\xx_t) - \xx_\star}_2^2 \\
&= \norm{\xx_t - \xx_\star}_2^2 - 2\gamma \lin{\nabla f(\xx_t), \xx_t - \xx_\star} + \gamma^2 \norm{\nabla f(\xx_t)}_2^2\\
&\leq (1 - 2\gamma \mu_\star + \gamma^2 L^2) \norm{\xx_t - \xx_\star*}_2^2 \\
&\leq (1 - \gamma (2\mu_\star - \gamma L^2)) \norm{\xx_t - \xx_\star}_2^2\\
&\leq (1 - \gamma \mu_\star) \norm{\xx_t - \xx_\star}_2^2 \, ,
	\end{align*}

where the last inequality holds because $\gamma \leq \frac{\mu_\star}{L^2}$. Hence, if $\xx_t \notin X$ for $t \in [T - 1]$, we have 
\begin{align*}
	\norm{\xx_T - \xx_\star}_2^2 &\leq (1 - \gamma \mu_\star)^T  \norm{\xx_t - \xx_\star}_2^2\\
	&\leq  (1 - \gamma \mu_\star)^T  r_W \, .
\end{align*}

Let $T_0 :=  \frac{\log_2\frac{c_X}{r_W}}{\log_2{(1 - \gamma\mu_\star)}}$. For $T >T_0$, we have

\[
	\norm{\xx_T - \xx_\star}_2^2  \leq (1 - \gamma \mu_\star) c_X < c_X  \, ,
\]
which means $\xx_T \notin X$. Therefore, if GD does not reach any point in $X$ in the first $T_0$ steps, it will not reach any point in $X$ afterwards neither. Therefore, the probability of encountering any point in $X$ is the same as the probability of encountering such points in the first $T_0$ steps. According to Lemma~\ref{lemma:avoid}, this value is bounded as:
\begin{align*}
	2^{(T_0 + 1)d} \cdot \frac{\cL(X)}{\cL(W)} &= 2^d \cdot \frac{c_X}{r_W}^{\frac{d}{\log_2{(1 - \gamma\mu_\star)}}} \cdot \frac{\cL(X)}{\cL(W)} \\
	&= 2^d \cdot \frac{r_W}{c_X}^{-\frac{d}{\log_2{(1 - \gamma\mu_\star)}}} \cdot \frac{\cL(X)}{\cL(W)} \, . \qedhere
\end{align*}
\end{proof}
\section{Proof of Lemma~\ref{lemma:escape}}
\label{app:escape}

\begin{lemma*}
	Let $f$ be a function that is $\Lg$-smooth and consider running GD with learning rate $\gamma$ randomly initialized over a set $W$ with $\cL(W) > 0$. 
	Let $M$ be a set with diameter $r$, containing a local minimum $\xx^\dagger$ and 
	define $P(M) := \{\xx \notin M \mid \norm{\xx-\xx^\dagger}_2 \leq r \sqrt{\gamma^2\Lg^2 - 3 } \} $
	to be the set surrounding  $M$.  
	Assume $f$ is $L < \Lg$-smooth and $\mu_\star$-OPSC  over $P(M)$ with respect to a (global) minimum $\xx_\star$
	that is sufficiently far from $M$, formally,  
	$\norm{\xx_\star - \xx^\dagger}_2 \geq  r \cdot \frac{1 + \sqrt{(\gamma^2\Lg^2 - 3)(1 - \gamma \mu_\star)}}{1 - \sqrt{1 - \gamma \mu_\star}}$.
	Finally, assume $f$ is $\mu^\dagger$-OPSC with respect to $x^\dagger$ over $M$ where $\mu^\dagger > \frac{2L^2}{\mu_\star}$. 
	Then, using a suitable learning rate $\frac{2}{\mu^\dagger} < \gamma \leq \frac{\mu_\star}{L^2}$, if GD reaches a point $M$, it will escape $M$ and reach a point with distance to $x_\star$ of less than $\norm{\xx^\dagger - \xx_\star} - r$ almost surely.
\end{lemma*}

\begin{proof}
	Let $t$ be the smallest step where $\xx_t \in M$. Using Corollary~\ref{cor:avoid-finite}, $\xx_t \neq \xx^\dagger$ almost surely. Therefore $\norm{\xx_t - \xx^\dagger} > 0$. Since $\gamma > \frac{2}{\mu^\dagger}$, we have 
	\begin{align*}
		\norm{\xx_{t + 1} - \xx^\dagger}_2^2 &= 		\norm{\xx_{t} - \gamma\nabla f(\xx_t) - \xx^\dagger}_2^2 \\
		&= \norm{\xx_t - \xx^\dagger}_2^2 - 2\gamma \lin{\nabla f(\xx_t),\xx_t - \xx^\dagger} + \gamma^2\norm{\nabla f(\xx_t)}_2^2\\
		&\geq \norm{\xx_t - \xx^\dagger}_2^2 - 2\gamma \norm{\nabla f(\xx_t)}_2\norm{\xx_t - \xx^\dagger}_2 + \gamma^2\norm{\nabla f(\xx_t)}_2^2\\
		&=\norm{\xx_t - \xx^\dagger}_2^2 + \gamma\norm{\nabla f(\xx_t)}_2(\gamma\norm{\nabla f(\xx_t)}_2 - 2\norm{\xx_t - \xx^\dagger}_2)\\
		&\geq\norm{\xx_t - \xx^\dagger}_2^2 + \gamma\norm{\nabla f(\xx_t)}_2(\gamma\mu^\dagger\norm{\xx_t - \xx^\dagger}_2 - 2\norm{\xx_t - \xx^\dagger}_2)\\
		&\geq\norm{\xx_t - \xx^\dagger}_2^2 + \gamma\norm{\nabla f(\xx_t)}_2\norm{\xx_t - \xx^\dagger}_2(\gamma\mu^\dagger- 2)\\
		&\stackrel{(A)}{\geq}\norm{\xx_t - \xx^\dagger}_2^2 + \gamma\mu^\dagger\norm{\xx_t - \xx^\dagger}_2^2(\gamma\mu^\dagger - 2)\\
		&\stackrel{(B)}{\geq}(2\gamma\mu^\dagger - 3)\norm{\xx_t - \xx^\dagger}_2^2 \, ,
		\end{align*}
	where (A) holds because $\gamma\mu^\dagger - 2 > 0$ and (B) is obtained by using the lower bound $\gamma\mu^\dagger > 2$. Therefore, the distance to $\xx^\dagger$ grows at least with the rate $2\gamma\mu^\dagger - 3 > 1$.
	Hence, GD is guaranteed to reach a point $\xx_{t + k}$ outside $M$ for some $k > 0$. If $\norm{\xx_{t + k} - \xx_\star} \leq \norm{\xx^\dagger - \xx_\star} - r$, we are done. Otherwise, we verify that this condition holds for $\xx_{t + k + 1}$.

	First note that 
	
	\begin{align*}
		\norm{\xx_{t+k} - \xx^\dagger}_2^2 &= \norm{\xx_{t+k-1} - \gamma \nabla f(\xx_{t + k - 1}) - \xx^\dagger}_2^2 \\
		&= \norm{\xx_{t + k - 1} - \xx^\dagger}_2^2 - 2\gamma \lin{\nabla f(\xx_{t + k - 1}), \xx_{t + k - 1} - \xx^\dagger} + \gamma^2 \norm{\nabla f(\xx_{t + k - 1})}_2^2\\
		&\leq \norm{\xx_{t + k - 1} - \xx^\dagger}_2^2 (1 -2\gamma \mu^\dagger + \gamma^2\Lg^2)\\
		&\leq r^2(1 -2\gamma \mu^\dagger + \gamma^2\Lg^2)\\
		&\leq r^2(\gamma^2\Lg^2 - 3)\,,
	\end{align*}
	where the last inequality holds because $\gamma > \frac{2}{\mu^\dagger}$.
	\begin{align*}
		\norm{\xx_{t+k+1} - \xx_\star}_2^2 &= \norm{\xx_{t + k} - \gamma \nabla f(\xx_{t + k}) - \xx_\star}_2^2\\
		&= \norm{\xx_{t + k} - \xx_\star}_2^2 - 2\gamma \lin{\nabla f(\xx_{t + k}), \xx_{t + k} - \xx_\star} + \gamma^2\norm{\nabla f(\xx_{t + k})}_2^2\\
		&\leq \norm{\xx_{t + k} - \xx_\star}_2^2 (1 - 2\gamma \mu_\star+ \gamma^2L^2)\\
		&\leq \norm{\xx_{t + k} - \xx_\star}_2^2 (1 - \gamma \mu_\star) \,,
	\end{align*}
	where in the last inequality we used $\gamma \leq \frac{\mu_\star}{L^2}$. We can now write
	\begin{align*}
		\norm{\xx_{t+k+1} - \xx_\star}_2 &\leq (\norm{\xx^\dagger - \xx_\star}_2 + \norm{\xx_{t+k}-\xx^\dagger}_2)\sqrt{1 - \gamma \mu_\star} \\
		&\leq \left(\norm{\xx^\dagger - \xx_\star}_2 + r\sqrt{\gamma^2\Lg^2 - 3}\right)\sqrt{1 - \gamma \mu_\star} \,.
	\end{align*}

	Given the lower bound on distance $\norm{\xx^\dagger - \xx_\star}$, we have
	\begin{align*}
		r (\sqrt{(\gamma^2\Lg^2 - 3)(1 - \gamma \mu_\star)} + 1) \leq \norm{\xx^\dagger - \xx_\star}_2 (1 - \sqrt{1 - \gamma \mu_\star}) \,.
	\end{align*}
	This yields 
	\[
	\norm{\xx_{t+k+1} - \xx_\star}_2 \leq \norm{\xx^\dagger - \xx_\star}_2 - r \, ,
	\]
	completing the proof. 
\end{proof}

\section{Proof of Theorem~\ref{thm:class_minima_change}}
\label{app:thm-class-minima-change}
\begin{theorem*}
	Consider any function $f$ that is $L$-smooth and $\mu_\star$-OPSC with respect to some minimum $\xx_\star$ in its landscape except on a region $M$ containing a local minimum $\xx^\dagger$ satisfying the conditions in Lemma~\ref{lemma:escape}. GD initialized randomly inside $M$ converges to $\xx^\dagger$ with a small learning rate $\gamma < \frac{\mu^\dagger}{\Lg^2}$ but will instead converge to $\xx_\star$ with a large learning rate $\frac{2}{\mu^\dagger} < \gamma \leq \frac{\mu_\star}{L^2}$ almost surely.
\end{theorem*}
\begin{proof}
	When GD is initialized inside $M$ and the learning rate is small satisfying $\gamma < \frac{\mu^\dagger}{\Lg^2}$, since we have $\mu^\dagger$-OPSC and $\Lg$-smoothness inside $M$, the iterates will satisfy
	\begin{align*}
		\norm{\xx_{t+1} - \xx^\dagger}_2^2 &= \norm{\xx_t - \gamma\nabla f(\xx_t) -  \xx^\dagger}_2^2 \\
		&= \norm{\xx_t - \xx^\dagger}_2^2 - 2\gamma \lin{\nabla f(\xx_t), \xx_t - \xx^\dagger} + \gamma^2 \norm{\nabla f(\xx_t)}_2^2\\
		&\leq (1 - 2\gamma \mu^\dagger + \gamma^2 \Lg^2) \norm{\xx_t - \xx^\dagger}_2^2 \\
		&\leq (1 - \gamma (2\mu^\dagger - \gamma \Lg^2)) \norm{\xx_t - \xx^\dagger}_2^2\\
		&\leq (1 - \gamma \mu^\dagger) \norm{\xx_t - \xx^\dagger}_2^2 \, ,
	\end{align*}
Therefore, GD will converge to $\xx^\dagger$. 
Let us now consider the case when GD is instead applied using a large learning rate satisfying $\frac{2}{\mu^\dagger} < \gamma \leq \frac{\mu_\star}{L^2}$. Furthermore, we allow initialization over any arbitrary set (instead of only subsets of $M$) as long as they satisfy $\cL(W) > 0$. In this case, for each iterate, if $\xx_t \notin M$, similar to above we have
\[
\norm{\xx_{t+1} - \xx_\star}_2^2 \leq (1 - \gamma \mu_\star)\norm{\xx_t-\xx_star}_2^2 \, .
\]
If $\xx_t \in M$, Lemma~\ref{lemma:escape} shows that there exists $k > 0$ such that $\xx_{t + k} \notin M$ and $\norm{\xx_{t+k} - \xx_\star}_2^2$ is less than the distance of any point in $M$ to $\xx_\star$. Since $\xx_{t+k} \notin M$ the above argument holds and the distance to $\xx_\star$ decreases. Therefore, for any $t^\prime > t +k$ this distance $\norm{\xx_{t^\prime} - \xx_\star}_2^2$ remains less than the distance of any point in $M$ to $\xx_\star$. This guarantees that $\xx_{t^\prime} \notin M$. Hence  the distance to $\xx_\star$ keeps decreasing which means GD will converge to $\xx_\star$.
\end{proof}

\section{Discussion about Lemma~\ref{lemma:escape}'s Assumptions}
\label{app:escape_lemma_assumptions}

\paragraph{OPSC condition inside $M$} We assume $f$ is OPSC with respect to a different minima inside $M$ in order to ensure GD will escape from $M$. However, other conditions might also ensure the same effect. The theorem would also hold with those assumptions. Note that the sharpness of $M$ with respect to the rest of landscape is reflected through the lower bound on $\mu^\dagger$ and is necessary so we can set the learning rate in the given range. As an example, when $f$ is a quadratic function everywhere except $M$ (such as in Figure~\ref{fig:thm-escape-illustration}), we have $\mu_\star = L$ and the lower bound becomes $\mu^\dagger > 2L$.
\paragraph{OPSC condition around $M$} %
We combine this assumption with the assumption on $M$ being sufficiently far from the global minimum in order to ensure that once GD escapes from a local minima, the gradient points strongly towards $\xx_\star$. This ensures that GD will reach a point closer to the global minimum after escaping $M$. While the OPSC assumption is not necessary to show GD will never converge to $M$ and may be replaceable by alternatives, an assumption on the distance to $\xx_\star$ might be necessary to show GD will not return to $M$. For example, consider a quadratic function where the region around minimum is replaced by a sharper quadratic function, as plotted in Figure~\ref{fig:bad-function-without-distance}. In this case, GD with a high learning rate will keep returning to $M$ though it will never converge to it. As alternatives to OPSC assumption on $P(M)$, one can assume GD converges in at most a fixed number of steps (which Corollary~\ref{cor:avoid-finite} states can not be to any point in $M$ almost surely) or assume directly that the gradient points strongly away from $M$. Finding similar assumptions is grounds for future work.

\section{Effect of Learning Rate on Stochasticity}
\label{app:learning_rate_trajectory}
Let us focus on the case where $f(\xx)$ is the finite-sum $\frac{1}{N}\sum_{i=1}^N f_i(\xx)$. Then, using a large learning rate $k \gamma$, the iterates would satisfy

\begin{align}
	\frac{\xx_{t+1} - \xx_t}{\gamma} = -k \nabla f_{r_t}(\xx_t) = -k\nabla f(\xx_t) - k (\nabla f_{r_t}(\xx_t) - \nabla f(\xx_t)) \,.
\end{align}

Let us assume that the deviation direction of each data point from the true gradient changes very slowly, i.e. the functions $f_i - f$ are extremely smooth. Then, using a smaller learning rate we instead have

\begin{align*}
	\frac{\xx_{t+k} - \xx_t}{\gamma} &= - \sum_{i=0}^{k-1} \nabla f_{r_{t+i}}(\xx_{t+i})\\
	& = -\sum_{i=0}^{k-1}\nabla f(\xx_{t+i}) - \sum_{i=0}^{k-1} (\nabla f_{r_{t+i}}(\xx_{t+i}) - \nabla f(\xx_{t+i})) \\
	& \approx -\sum_{i=0}^{k-1}\nabla f(\xx_{t+i}) - \sum_{i=0}^{k-1} (\nabla f_{r_{t+i}}(\xx_{t}) - \nabla f(\xx_{t}))  \,.
\end{align*}

To compare the strength of noise in each case we can for example compare the variance of the right hand side. Let $\sigma^2 := \frac{1}{N}\sum_{i=1}^N\norm{\nabla f_i(\xx_t) -\nabla f(\xx_t)}_2^2$. Then, the variance when using the large learning rate would be $k^2\sigma^2$. When using a smaller learning rate and sampling at each step to obtain $r_t$ the variance is instead $k \sigma^2$ and is therefore reduced. However, using SGD with repeats, i.e. using $r_{t+i} = r_t$ for $1 \leq i \leq k - 1$, we recover the same variance as the large learning rate. Therefore, using SGD with repeats, allows maintaining the same level of noise while still using a smaller learning rate.

\section{Proof of Proposition~\ref{prop:stochastic-effect}}
\label{app:stochastic-minima-theory}

We first state the following key theorem which describes criteria ensuring escaping from or staying around a minimum:
\begin{theorem}
	\label{thm:stochastic-bounds}
	Let $M$ be a ball with radius $r$ centered at a minimum $\xx^\dagger$ and assume $f$ is $L_M$-smooth over $M$ and $\mu^\dagger$-OPSC with respect to $\xx^\dagger$. Consider running SGD with a small learning rate $\gamma \leq \frac{\mu^\dagger}{2L_M^2}$. Assume that when running SGD such that the oracle noise is bounded as
	\[
	\E \norm{\gg_t - \nabla f(\xx_t)}_2^2 \leq \sigma^2 \, .
	\]
	Furthermore assume that for some $c \leq 1$ we have $\prob{\norm{\gg_t - \nabla f(\xx_t)}_2^2 > c^2\sigma^2} > 0$ for all $\xx_t \in M$. Assume SGD reaches a point in $M$. If $\sigma^2 \leq \frac{\mu^\dagger}{\gamma}r^2$ it will remain in $M$ with probability at least $\frac{2}{2-\gamma\mu^\dagger}$. On the other hand, if $c^2\sigma^2 \geq \frac{2L_M}{\gamma}r^2 + \epsilon$ for some $\epsilon > 0$ it will escape $M$ almost surely. %
\end{theorem}
\begin{proof}

	Let $t$ denote the parameters at an iteration such that $\xx_t \in M$. We have
	\begin{align*}
		\E\norm{\xx_{t+1} - \xx^\dagger}^2 &= \norm{\xx_t  - \xx^\dagger}^2 - 2\gamma\lin{\E\gg_t, \xx_t - \xx^\dagger} + \gamma^2\E\norm{\gg_t}^2\\
		&=\norm{\xx_t - \xx^\dagger}^2  - 2\gamma\lin{\nabla f(\xx_t), \xx_t - \xx^\dagger} + \gamma^2 \norm{\nabla f(\xx_t)}^2 + \gamma^2\sigma^2\\
		&\leq \norm{\xx_t - \xx^\dagger}^2 (1 - 2\gamma\mu^\dagger + \gamma^2L_M^2) + \gamma r^2 \mu^\dagger\\
		&\leq r^2(1 - \gamma\mu^\dagger + \gamma^2 L_M^2 )\\
		&\leq r^2 (1 - \frac{\gamma \mu^\dagger}{2})\, .
	\end{align*}
	Thus, using Markov inequality we have 
	\[
	\prob{\norm{\xx_{t+1} - \xx^\dagger}^2 > r^2}\leq \frac{1}{1 - \frac{\gamma \mu^\dagger}{2}} \, ,
	\]
	which shows the claim.
	On the other hand, let $p := \prob{\norm{\gg_t - \nabla f(\xx_t)}_2^2 >c^2 \sigma^2}$. Then with probability at least $p > 0$ we have
	\begin{align*}
		\norm{\xx_{t + 1} - \xx^\dagger}_2^2 &= 		\norm{\xx_{t} - \gamma\nabla f(\xx_t) - \xx^\dagger  - \gamma (\gg_t - \nabla f(\xx_t))}_2^2 \\
		&= \norm{\xx_t - \xx^\dagger}_2^2 - 2\gamma \lin{\nabla f(\xx_t),\xx_t - \xx^\dagger} + \gamma^2\norm{\nabla f(\xx_t)}_2^2  + \gamma^2\norm{\gg_t - \nabla f(\xx_t)}_2^2\\
		&\geq \norm{\xx_t - \xx^\dagger}_2^2 - 2\gamma \norm{\nabla f(\xx_t)}_2\norm{\xx_t - \xx^\dagger}_2 + \gamma^2\norm{\nabla f(\xx_t)}_2^2 +  c^2 \gamma^2\sigma^2\\
		&=\norm{\xx_t - \xx^\dagger}_2^2 + \gamma\norm{\nabla f(\xx_t)}_2(\gamma\norm{\nabla f(\xx_t)}_2 - 2\norm{\xx_t - \xx^\dagger}_2) +  c^2\gamma^2\sigma^2\\
		&\geq\norm{\xx_t - \xx^\dagger}_2^2 + \gamma\norm{\nabla f(\xx_t)}_2(\gamma\mu^\dagger\norm{\xx_t - \xx^\dagger}_2 - 2\norm{\xx_t - \xx^\dagger}_2) +  c^2\gamma^2\sigma^2\\
		&\geq\norm{\xx_t - \xx^\dagger}_2^2 + \gamma\norm{\nabla f(\xx_t)}_2\norm{\xx_t - \xx^\dagger}_2(\gamma\mu^\dagger - 2) +  c^2\gamma^2\sigma^2\\
		&\geq\norm{\xx_t - \xx^\dagger}_2^2 + \gamma L_M \norm{\xx_t - \xx^\dagger}_2^2(\gamma\mu^\dagger - 2) + c^2\gamma^2\sigma^2\\
		&\geq\norm{\xx_t - \xx^\dagger}_2^2 + \gamma L_M r^2(\gamma\mu^\dagger - 2) + 2\gamma L_M r^2 + \epsilon\\
		&\geq\norm{\xx_t - \xx^\dagger}_2^2 + \gamma^2 L_M\mu^\dagger r^2 + \epsilon \, .
	\end{align*}
	
	This means the distance to $\xx^\dagger$ grows at least with the constant $\epsilon$. Let $q := \frac{r^2}{\epsilon}$. Therefore, with probability at least $p^q$, one of $\xx_{t + 1}, \ldots, \xx_{t + q}$ will be out of $M$.  This holds for any consecutive $q$ iterates. Partitioning the iterates to parts of consecutive iterates of size $q$, each part has a positive probability of containing a point outside $M$. Therefore SGD will reach a point outside of $M$ almost surely. %
\end{proof}

The function plotted in Figure~\ref{fig:stochastic-example} can be formally defined as follows:

\newcommand{\fstoc}{{f_{\text{tm}}}}
\begin{align*}
	f_1(x) & := 86400((x - \alpha)^3 - (2.9 - \alpha)^3)+ 2.9^2\\
	f_2(x) & := \beta((x - 3.1)^3 + 0.001)+ f_1(3)\\
	f_3(x) & := -300(x-3.1)^2 + f_2(3.1)\\
\fstoc(x)& :=
\begin{cases}
	x^2 & x \leq 2.9\\
	f_1(x)&2.9 < x \leq 3\\
	f_2(x)&3 < x \leq 3.1\\
	f_3(x)&3.1 < x \leq 3.7\\
	450 * ((x - 4.1)^2 - 0.16) + f_3(3.7) & 3.7< x \\
	\end{cases}
\end{align*}
with $\alpha = 2.9 - \frac{\sqrt{2.9}}{360}$ and $\beta = 8640000(3-\alpha)^2$. This function satisfies the following properties:

\begin{itemize}
	\item $\fstoc$ is 2-smooth over $ \{x \mid x < 2.9\}$. 
	\item $\fstoc$ is 900-OPSC towards $4.1$ and 900-smooth over $ \{x \mid 3.7 < x  \}$. 
	\item $\fstoc$ is 4.5-OPSC towards $4.1$ over $ \{x \mid 3.108 < x  \}$. 
	\end{itemize}	

We now proceed to proving Proposition~\ref{prop:stochastic-effect}, stating it formally here:

\setcounter{theorembackup}{\value{theorem}}
\setcounterref{theorem}{prop:stochastic-effect}
\addtocounter{theorem}{-1}
\begin{proposition}
	Consider running SGD on $\fstoc$ with stochastic noise $\xi_t$ drawn i.i.d. at each step from the uniform distribution $Uniform(-\sigma, \sigma)$.  If the learning rate is small such that it satisfies $\gamma < \frac{1}{30^2}$ 
	 the algorithm will not converge to $x_\star$ for some set of initialization points with positive Lebesgue measure. In contrast, with a large learning rate satisfying $0.4 \leq \gamma \leq 0.5$ it is possible to choose $\sigma$ such that the algorithm will converge to $x_\star$ almost surely.
\end{proposition}
\begin{proof}
\setcounter{theorem}{\value{theorembackup}}
Assume $\gamma < \frac{1}{900}$. Consider the case where the algorithm is initialized inside $M_1 := \{x \mid 3.7 < x < 4.5\}$. Then if $\sigma$ satisfies $\sigma^2 \leq \frac{900}{\gamma}\cdot 0.4^2$ it will remain in $M_1$ with positive probability according Theorem~\ref{thm:stochastic-bounds}.  According to the same theorem, If this bound is not satisfied, then we have $\frac{\sigma^2}{2} \geq\frac{ 2\cdot  2 \cdot 2.9^2}{\gamma}$ which means any time SGD reaches a point in $M_\star := \{x \mid -2.9 < x < 2.9\}$, it will escape from it almost surely within a constant number of steps. This means that the algorithm will never stay close to $x_\star = 0$ forever or for an arbitrarily long number of steps.

Now consider the case where the learning rate is large enough. Choose $\sigma$ such that $5.1 <\sigma  < 5.5$. Note that if the iterates reach the set $\{x \mid -2.9 < x < 2.9\}$ they will never exit it since we have 
\begin{align*}
|x - \gamma (2x + \xi_t)| &= |(1 - 2\gamma)x - \gamma \xi_t| \\
&\leq (1 - 2\gamma)(2.9) + \gamma (5.5) \\
&\leq (1 - 2\gamma)(2.9) + 2\gamma (2.9) \\
& \leq 2.9 \, .
\end{align*}

We will now show that from any other point there is a positive probability of reaching the range $[-2.9, 2.9]$.  This fact combined with the almost sure guarantee of not escaping from $[-2.9, 2.9]$, proves that the algorithm will converge to this set almost surely.

Note that $\fstoc$ is $4.5$-OPSC towards $x=4$ over the set $\{x \mid 3.108 < x\}$.  Since $\gamma > 0.45$, it can be seen from the proof of Lemma~\ref{lemma:escape} that SGD will continue to get further from $x=4$ while it is in this set. Furthermore, given the direction of the gradients it is clear that the iterates would alternate between being less and more than $4$. Therefore, at some point, the iterates will exit this set reaching a point $x_t < 3.108$. Note that with a positive probability, the noise will not interfere with this escape since there is at least $0.5$ probability that the noise is aligned with the gradient direction.

If $3.1 < x < 3.108$, the gradient value is less than $5$. Since $\sigma > 5.1$ there is a positive probability of moving to the region $x < 3.1$. When $2.9 < x < 3.1$, because of the positive probability of alignment between the gradient and the noise, SGD will move to $x < 2.9$ with positive probability. Finally, given the smoothness of the region $x < 2.9$, if $x < -2.9$ SGD will converge toward $x_\star = 0$, ultimately reaching the region $-2.9 < x < 2.9$ with positive probability. This completes the proof.
\end{proof}

\section{Toy Example}
\label{app:toy-example}

\begin{figure}[t]
	\centering
	\begin{subfigure}[t]{0.45\textwidth}
		\centering
		\includegraphics[width=\textwidth]{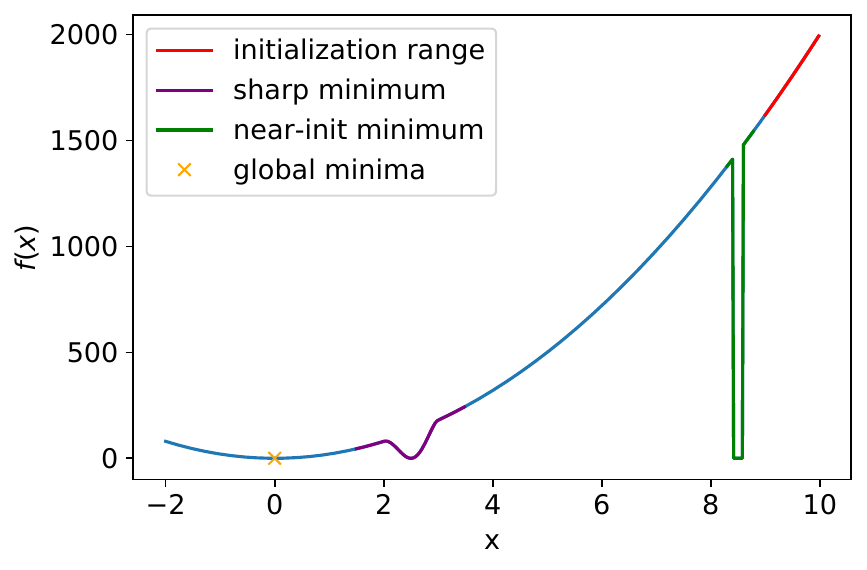}
		\caption{Landscape of function}
		\label{fig:toy-function}
	\end{subfigure}\hfill
	\begin{subfigure}[t]{0.45\textwidth}
		\centering
		\includegraphics[width=\textwidth]{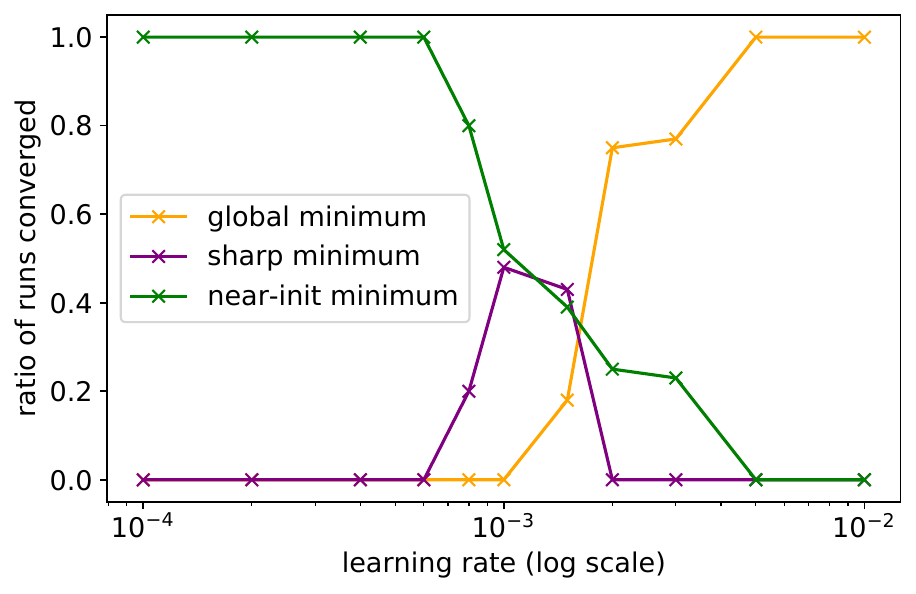}
		\caption{Distribution of convergence to each minima }
		\label{fig:toy-dist}
	\end{subfigure}
	\caption{The function used in the toy experiments which has two local minima, a mostly flat minima near the initialization points and a sharp minima further away. It can be clearly observed how as the learning rate grows the two effects of avoiding parts of the landscape and escaping sharp minimas allow GD to converge to the global minimum.}
	\label{fig:toy}
\end{figure}

In order to demonstrate the effects discussed in Section~\ref{sec:theory}, we experiment with running GD over a simple function. The landscape of this function is plotted in Figure~\ref{fig:toy-function} and its formula is presented in Appendix~\ref{app:toy-function-formula}. The function has two minima, one near the initialization and one further away. Since the near-init minimum is almost completely flat, i.e. gradient is constant and equal to zero (except for the edges which are extremely sharp lines in order to ensure the function remains continuous), 
if GD reaches a point in this region, it will remain there. However, as this region is very close to the initialization, Lemma~\ref{lemma:avoid} (more particularly Corollary~\ref{cor:avoid-ospc}) suggests that GD with large enough learning rate, will probably not reach any points in this region. To demonstrate this more clearly, we plot the trajectory of GD from several initialization points in Figure~\ref{fig:toy-flat}. It is worth noting that even with large learning rate it is possible for GD to get stuck in this region while it is possible to avoid this region even with a small learning rate. However, as suggested by our theoretical upper bound, the probability of this phenomenon increases with the learning rate.

The other minimum is much sharper than the rest of the function and therefore we can expect an escaping behavior similar to the one described by Lemma~\ref{lemma:escape}. This behavior is demonstrated in Figure~\ref{fig:toy-sharp}. Note that unlike the previous case, GD with large learning rate always (except when landing directly at the minimum) escapes the sharp minimum while GD with small learning rate converges. 

We measure rate of convergence of GD for 100 different random initialization to each of these three regions for different learning rates. The results are plotted in Figure~\ref{fig:toy-dist}. We observe that as the learning increases, the rate of avoiding the near initialization minimum increases. While the learning rate is not still high enough, GD will converge to the sharp minimum while as the learning rate increases further, it is also able to escape the sharp minimum and converge to the global minimum. This behavior is completely compatible with what can be expected based on the results and effects discussed in Section~\ref{sec:theory}.

\section{Function for Toy Example}
\label{app:toy-function-formula}
\[
f(x) := \begin{cases}
	-1600(x - 2.5)^5 - 2000(x - 2.5)^4 + 800(x - 2.5)^3 + 1020(x - 2.5)^2 & 2 \leq x \leq 3,\\
	1411.2 \times (1 - 10^4(x - 8.4)) &  8.4 \leq x \leq 8.40001,\\
	0 & 8.40001 \leq x \leq 8.59999,\\
	1479.2 \times (10^4(x - 8.6) + 1) & 8.59999 \leq x \leq 8.6,\\
	20x^2 & \text{otherwise}.
	\end{cases}
\]

\section{2D Toy Example}

To build more intuition and show the effect of large learning rate extends to multi-dimensions, we also provide a toy example on 2D. Figure~\ref{fig:2d-toy-example} shows the landscape of our toy example which contains four local minima that are also sharp. Consider GD initialized randomly on the region $W := \{(x, y) \mid 3\leq x, y \leq 4\}$. Then, using a small learning rate GD will converge to the minimum in the region $[1,2]\times[1,2]$. However, using a larger learning rate allows escaping that minimum. Increasing the magnitude, GD can also jump over the minimum completely. In these cases, GD will converge towards the global minimum at $(0, 0)$.

\begin{figure}[H]
	\centering
	\includegraphics[width=.5\textwidth]{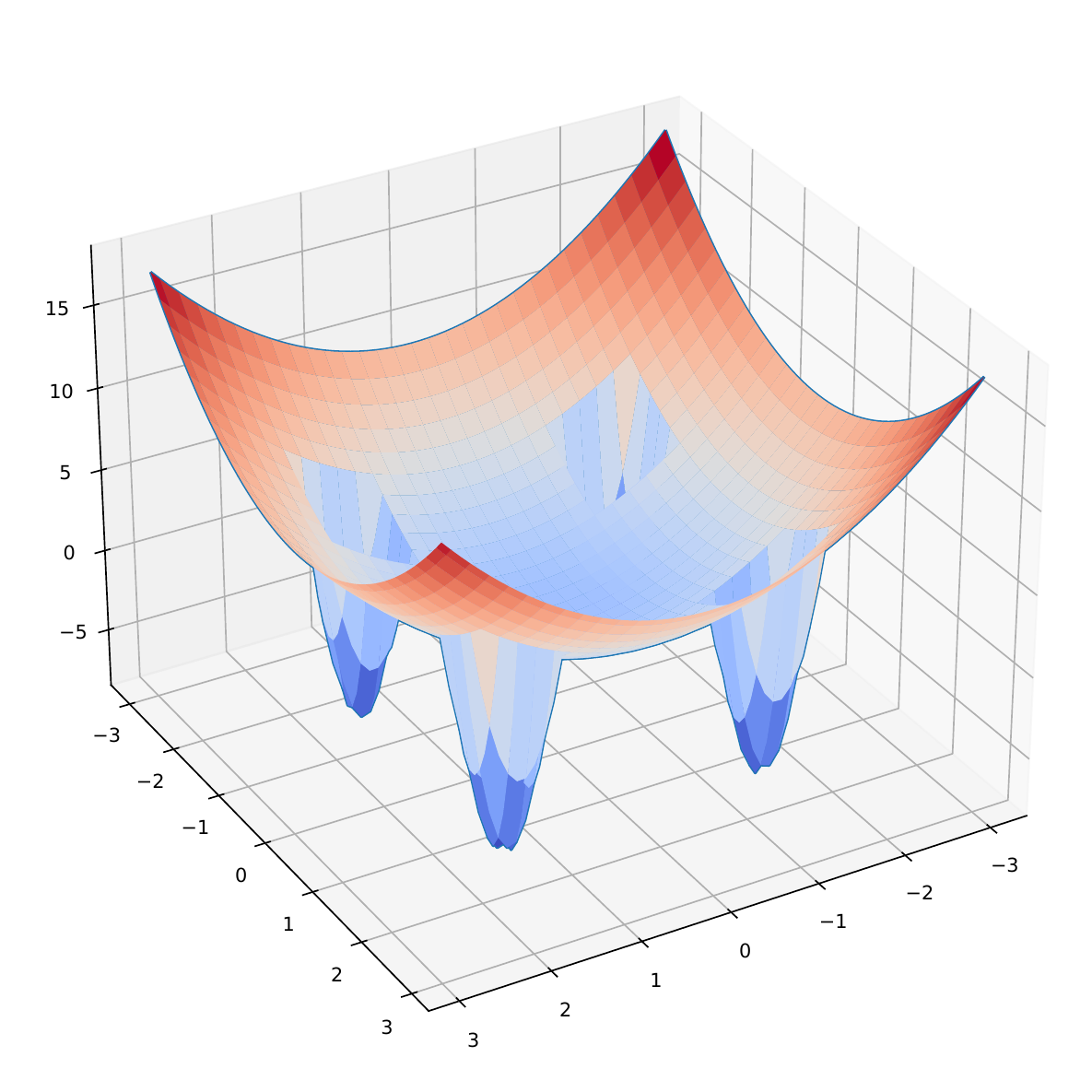}
	\caption{The landscape of the function $f(x, y) := x^2 + y^2 - 200ReLU(|x| - 1)ReLU(|y| - 1)ReLU(2-|x|)ReLU(2-|y|)$.}
	\label{fig:2d-toy-example}
\end{figure}

\section{Results of Stopping Repeats from Different Epochs}
\label{app:compare-different-switchs}
In Section~\ref{sec:exp-repeats}, we explained that at the end of training we stop using the same batch for $k$ steps and train in the standard way (each batch used just once) for additional 10 epochs. This was done to make sure the model that is used to obtain the accuracy on the test data is not overfitted on one batch which might be more likely to happen at the end of the training. In this section, we also experiment with stopping repeats, i.e. using the same batch for $k$ steps, earlier in the training. The result is plotted in Figure~\ref{fig:compare-different-switchs}. No significant improvement is observed.

\begin{figure}[H]
	\centering
		\includegraphics[width=.6\textwidth]{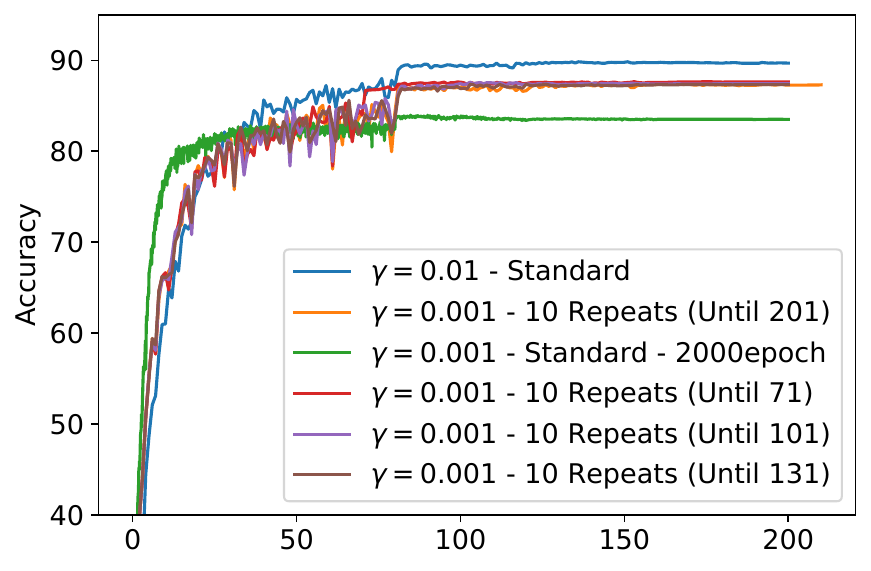}
	\caption{Plot of test accuracy when we stop using the same batch several times (doing repeats) at different epochs. It can be clearly observed that the stopping epoch does not affect the final accuracy and the gap with the case of GD with a large learning rate can be clearly observed. 
	}
	\label{fig:compare-different-switchs}
\end{figure}

\section{Experiments on CIFAR100}
In order to make sure our results extend to other scenarios, we repeat the experiments in Section~\ref{sec:exp-repeats} on CIFAR100 and observe a similar behavior. The accuracy on the train and test datasets during training are plotted in Figure~\ref{fig:sgd-vs-repeats-cifar100}.

\begin{figure}[H]
	\centering

		\centering
		\includegraphics[width=.45\textwidth]{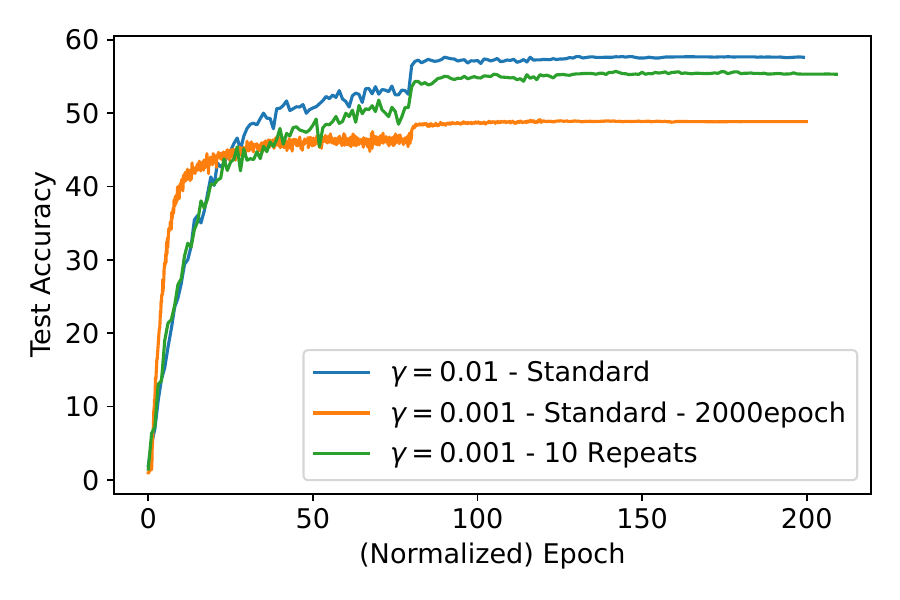}\hfill
		\includegraphics[width=.45\textwidth]{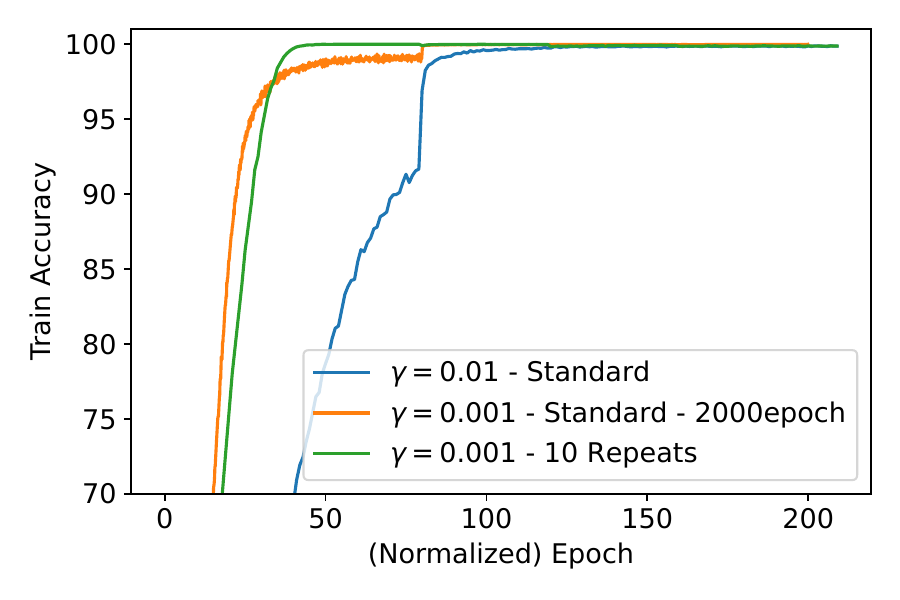}

		\caption{Comparsion between performance of SGD with different learning rates on CIFAR100.  Repeating batches is turned off at epoch 200 and 10 additional epochs are performed (green). For the experiment with 2000 epochs (orange), the plot is normalized to 200 epochs. For more explanations refer to Figure~\ref{fig:sgd-vs-repeats} and Section~\ref{sec:exp-repeats}.}
		\label{fig:sgd-vs-repeats-cifar100}
\end{figure}

\section{Experiments on SGD without Momentum}
\label{app:sgd-without-momentum}
In Section~\ref{sec:exp-repeats}, we designed an experiment to show the effect of large learning rate is important and goes beyond controlling the effect of stochastic noise on the trajectory. Since our goal was to demonstrate the relevance and importance of analyzing these effects for the practical scenarios, we used the standard training settings including momentum and weight decay. For completeness, in this section we also include the results of applying SGD with repeats without momentum and without weight decay. We compare standard SGD with learning rate $0.05$. standard SGD with learning rate $0.005$, and SGD with $k=10$ repeats and learning rate $0.005$. Accuracy on test and train datasets throughout training is plotted in Figure~\ref{fig:sgd-vs-repeats-nomomentum}. The figure also contains the accuracy during training with momentum to allow comparison. As expected, applying SGD without momentum performs worse than SGD with momentum. The gap between small and large learning rate can be observed in this case as well. However, we do not observe an improvement when applying repeats. 

\begin{figure}[H]
		\centering

				\centering
				\includegraphics[width=.45\textwidth]{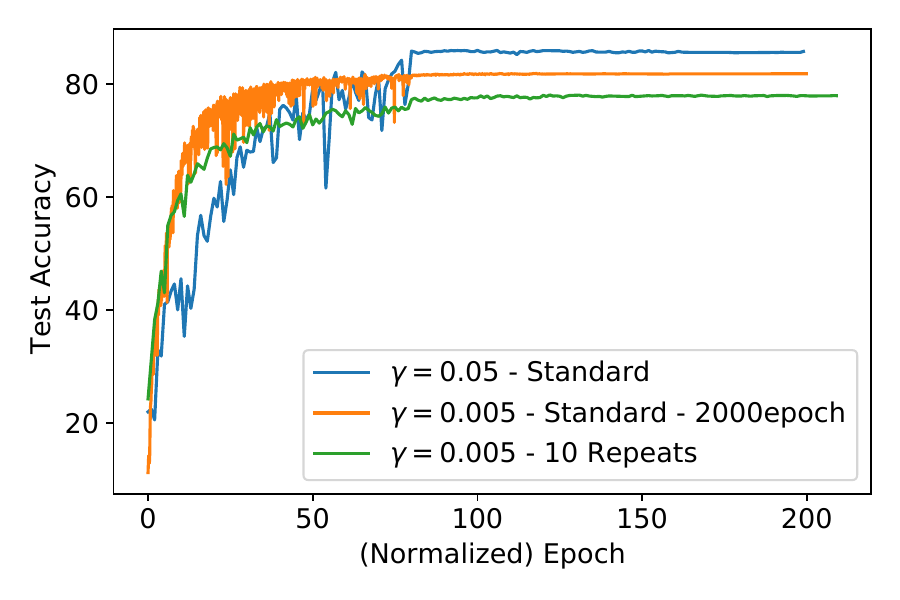}\hfill
				\includegraphics[width=.45\textwidth]{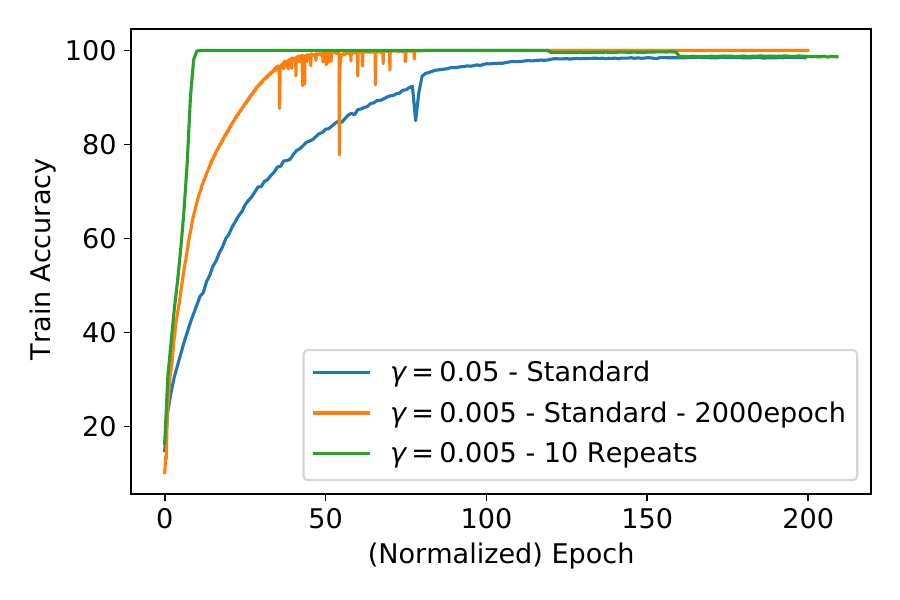}

				\caption{Comparsion between performance of SGD without momentum and weight decay and with different learning rates on CIFAR10.  Repeating batches is turned off at epoch 200 and 10 additional epochs are performed (green). For the experiment with 2000 epochs (orange), the plot is normalized to 200 epochs. For more explanations refer to Figure~\ref{fig:sgd-vs-repeats} and Section~\ref{sec:exp-repeats}.}
				\label{fig:sgd-vs-repeats-nomomentum}
	\end{figure}

\section{Loss on the line between large and small learning rate trajectories}
In Section~\ref{sec:exp-gd-escape}, we observed that GD with a large learning rate shows behavior similar to escaping and follows a different trajectory  than GD with the small learning rate. In this section, we plot the loss along the line between the first point in the trajectory of GD with small learning rate (hereafter called the origin) and different points along the trajectory of GD with the large learning rate. Figure~\ref{fig:gd-escape-interpolation-line-loss} shows the loss based on the norm of the distance to the origin. As expected the loss increases along the line between the origin and points at the beginning of the trajectory. This is when GD is showing escaping behaviors. However, interestingly, the loss is decreasing along the line between the origin and points encountered later in the trajectory. 

\begin{figure}[H]
	\centering
	\includegraphics[width=.6\textwidth]{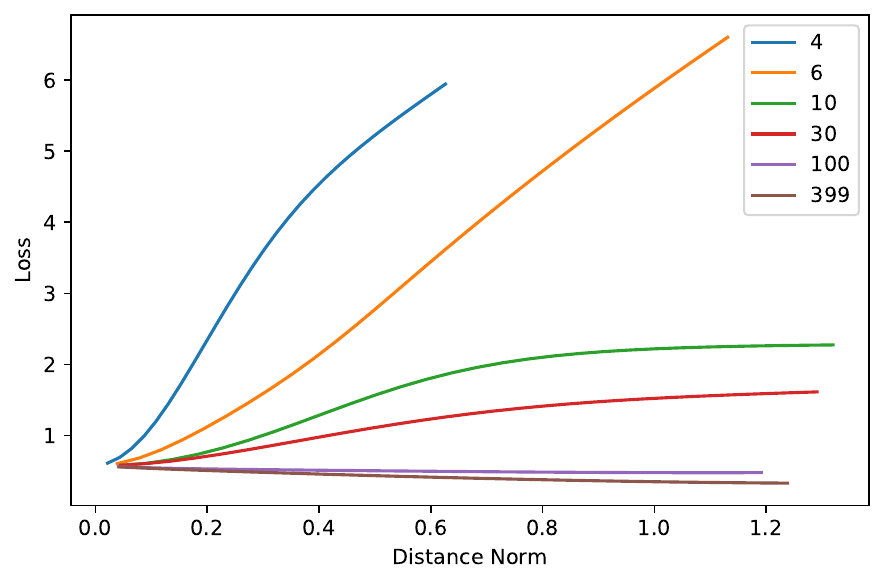}
	\caption{The value of loss along the line between the first point in the trajectory of GD with small learning rate and different points in the trajectory of GD with a large learning rate. For more detailed explanation of the settings, refer to Section~\ref{sec:exp-gd-escape}. Each line corresponds to the value of loss measured on 30 points along the line between the initialization and the parameters after an step. The step number for each line is written in the box located on the top-right of the plot.}
	\label{fig:gd-escape-interpolation-line-loss}
\end{figure}

{
	\small
	\bibliographystyleappendix{plainnat}
	\bibliographyappendix{references}
}
\clearpage

\end{document}